\newcommand{\blambda}{\boldsymbol\lambda}
\newtheorem{theorem}{Theorem}
\newtheorem{corollary}{Corollary}[theorem]
\DeclareMathOperator{\atantwo}{atan2}
\title{Bridging the Gap Between Spectral and Spatial Domains in Graph  
Neural Networks }
\author{
  Muhammet~Balcilar
   \thanks{\texttt{muhammetbalcilar@gmail.com}}, ~ 
   Guillaume~Renton,
   Pierre~H\'eroux,
   Benoit~Ga\"uz\`ere, 
   S\'ebastien~Adam,    
   Paul~Honeine  
   \\ \\
 LITIS Lab, University of Rouen Normandy \\
 Rouen, FRANCE \\
}
\begin{document}
\maketitle

\begin{abstract}
This paper aims at revisiting Graph Convolutional Neural Networks by bridging the gap between spectral and spatial design of graph convolutions. We theoretically demonstrate some equivalence of the graph convolution process regardless it is designed in the spatial or the spectral domain. The obtained general framework allows to lead a spectral analysis of the most popular ConvGNNs, explaining their performance and showing their limits. Moreover, the proposed framework is used to design new convolutions in spectral domain with a custom frequency profile while applying them in the spatial domain. We also propose a generalization of the depthwise separable convolution framework for graph convolutional networks, what allows to decrease the total number of trainable parameters by keeping the capacity of the model. To the best of our knowledge, such a framework has never been used in the GNNs literature. Our proposals are evaluated on both transductive and inductive graph learning problems. Obtained results show the relevance of the proposed method and provide one of the first experimental evidence of transferability of spectral filter coefficients from one graph to another. Our source codes are publicly
available at: \\ {\color{blue}\url{https://github.com/balcilar/Spectral-Designed-Graph-Convolutions}}
\end{abstract}

% keywords can be removed
\keywords{Graph Convolutional Neural Networks, Spectral Graph Filter.}

\section{Introduction  }
Over the past decade, Deep Learning, and more specifically Convolutional Neural Networks (CNNs) and Recurrent Neural Networks (RNNs), had a strong impact in various applications of machine learning, such as image recognition [1] and speech analysis [2]. These successes have mostly been achieved on sequences or images, i.e. on data defined on grid structures which benefit from linear algebra operations in Euclidean spaces. However, there are many domains where data (e.g. social networks, molecules, knowledge graph) cannot be trivially encoded into an Euclidean domain, but can be naturally represented as graphs.

This explains the recent challenge tackled by the machine learning community which consists in transposing the deep learning paradigm into the world of graphs. The objective is to revisit Neural Networks to operate on graph data, in order to benefit from the representation learning ability. In this context, many Graph Neural Networks (GNNs) have been recently proposed in the literature of geometric learning  [3,4,5,6]. GNNs are Neural Networks that rely on the computation of hidden representations of nodes using information carried by the whole graph. In contrast to conventional Neural Network, where the architecture of the network is related to the known and invariant topology of the data (e.g. a 2-D grid for images), the node features of GNNs are propagated according to the graph topology.

Among GNNs, Convolutional GNNs (ConvGNNs) aim to mimic the simple and efficient solution provided by CNN to extract features through a weight-sharing strategy along the presented data. In images, a convolution relies on the computation of a weighted sum of neighbor’s features and weight-sharing is possible thanks to the neighbor relative positions. With graph-structured data, designing such a convolution process is not straightforward. First, there is a variable and unbounded number of neighbors, avoiding the use of a fixed sized window to compute the convolution. Second, no order exists on node neighborhood. As a consequence, one may first redefine the convolution operator to design a ConvGNN.

As in images, a graph convolution process corresponds to the multiplication of a convolution kernel with the corresponding node feature vectors, followed by a sum or a mean rule. In the literature, there are some instances of trainable and non-trainable convolution kernels for graphs. Regardless if the convolution kernels are trainable or not, and according to the convolution theorem, two strategies have been investigated to design filter kernels, based either on the spectral or the spatial domains. 

Spectral-based convolution filters are defined from a graph signal processing point of view. In a nutshell, a basis is defined by the eigendecomposition of the graph Laplacian matrix. This allows to define the graph Fourier transform, and thus the graph filtering operators. The original form of the spectral graph convolution (non-parametric) can be defined by a function of frequency (eigenvalues). Therefore, this method can theoretically extract information on any frequency. However, despite the solid mathematical foundations borrowed from the signal processing literature, such approaches suffer from (i) a large computational burden induced by the forward/inverse graph Fourier transform, (ii) being spatially non-localized and (iii) the transferability problem, i.e., filters designed using a given graph cannot be applied on other graphs. To alleviate these issues, some approaches based on parameterization using B-spline [7], Chebyshev polynomials [8] and Cayley polynomials [9] have been proposed.
However, these approaches cannot use custom designed frequency response convolution, but only the one determined by B-spline, Chebyshev or Cayley polynomials. That means these methods cannot extract information on some custom band.

The second strategy is the spatial-based convolution, which is an extension of the conventional Euclidean convolution (e.g. 2D convolution in CNN), by aggregating nodes neighborhood information. Such convolutions have been very attractive due to their less computational complexity, their localized property and their transferability. Spatial-designed graph convolutions have to be a function of some spatial properties of graphs, such as adjacency, Laplacian or degree matrix combined with feature of connected nodes, and edge features. However, since they are designed in the spatial domain, their spectral behavior is not taken into account. We will show in the following that most of the existing spatial-designed convolutions are essentially low-pass filters. As a consequence, they do not have the ability to extract useful information on high frequency or some certain frequency bands. Yet, considering high-frequency information may be intuitively useful for some real-world problems where information localized on particular nodes have a strong influence on graph's property. For instance, molecular toxicity can be induced by some pharmacophores, i.e., particular subparts of molecule, which can consist in only one atom. Using only low-pass filters on such molecules will diffuse this discriminant information within the whole graph whereas a high-pass filter may help to highlight this useful difference.

\subsection*{Contributions}

In this paper, we bridge the gap between spectral and spatial domains for ConvGNNs. Our first contribution consists in demonstrating the equivalence of convolution processes regardless if they are designed in the spatial or the spectral domain. Taking advantage of this result, our second contribution is to  provide a spectral analysis of existing graph convolutions for four popular ConvGNNs, known as %PH: should we add: Graph Convolution Networks (
GCN [10], ChebNet [8], CayleyNet [9] and Graph Attention Networks (GAT) [11]. Using these results, our third contribution is to  design new convolutions in the spectral domain with a custom frequency profile
that provides a better convolution process. In this context, we also propose a spectral-designed multi-convolution method under the depthwise separable convolution framework. To the best of our knowledge, such a framework has never been used in the GNNs literature. It allows to decrease the total number of trainable parameters by keeping the variability capacity of the model at a maximum level. 

Our proposal is assessed on both transductive and inductive learning problems  [12]. 
In both settings, we show the relevance of the proposed method on well-known public benchmark datasets. Especially, the %huge 
success of the proposed method on inductive problems provides one of the first experimental evidence of transferability of spectral filter coefficients from one graph to another.

The remainder of this paper is organized as follows. In Section~2, we introduce ConvGNNs and we review existing approaches. Then, Section~3 describes the three main contributions mentioned above. Section~4 presents a series of experiments and results which validate our propositions. Finally, Section~5 is dedicated to the conclusion. %Supplementary material includes 4 appendices.

\section{ConvGNN: Problem Statement and State of the Art}

\subsection{Graph Learning Problems}
\label{secGLP}
Let $\mathcal{G}$ be a set of graphs, where each graph $G^{(k)}$ has $n_k$ nodes and an arbitrary number of edges. Node-to-node connectivity in $G^{(k)}$ is given by the adjacency matrix $A^{(k)}$. For unweighted graphs, $A^{(k)} \in \{0,1\}^{n_k \times n_k}$, while for weighted graphs, $A^{(k)} \in \mathbb{R}^{n_k \times n_k}$. In this paper, we consider undirected attributed graphs. Hence, $A^{(k)}$ is symmetric and features are defined on nodes by $X^{(k)} \in \mathbb{R}^{n_k \times f_0}$, with $f_0$ the length of feature vectors.

In the literature, there are three different types of learning problems on graphs. The first one is the single graph node classification or regression problem. In this case, $\mathcal{G}$ is reduced to a single graph denoted $G$, with $n$ nodes. Some of the nodes are labeled for training and the task is to predict the labels of unlabeled nodes. For a classification problem, the output would be represented by $\mathcal{Y} \in \{ 0, 1\}^{n \times n_c}$, i.e., a one-hot class encoding of the $n_c$ possible classes for each node. For a node regression problem, the output would be $\mathcal{Y} \in \mathbb{R}^n$. The second type of problems is multi-graph node classification or regression problem. In such cases, the output is defined as a set of $\mathcal{Y}^{(k)} \in \{ 0, 1\}^{n_k \times n_c}$ for classification or  $\mathcal{Y}^{(k)} \in \mathbb{R}^{n_k}$ for regression. The last type is the entire graph classification or regression problem, in which case the output must be $\mathcal{Y}^{(k)} \in \{ 0, 1\}^{n_c}$ %\PH{is it $n_c$, not $c$ ?}
or $\mathcal{Y}^{(k)} \in \mathbb{R}$  for classification and regression problems, respectively. 

Problems of the first type are transductive problems, while problems of the two last types are inductive since test data are completely unknown during training.

\subsection{Literature review}

For reviewing %existing 
ConvGNNs, we use the classical ``spectral vs. spatial" %categorization of approaches
dichotomy~[13]. Beyond, we propose for this review a third category called Spectral-Rooted Spatial Convolutions which gathers recent and efficient methods that take their foundations in the spectral domain, but apply them in the spatial one, without computing the graph Fourier transform.

\subsubsection{Spectral ConvGNN}
\label{sec2.2}

Spectral ConvGNNs rely on the spectral graph
theory~[14]. In this framework, signal on graphs are filtered using eigendecomposition of graph Laplacian [15]. A graph Laplacian is defined by $L=D-A$ (or $L=I-D^{-1/2}AD^{-1/2}$ for the normalized version), where $A$ is the adjacency matrix,  $D \in \mathbb{R}^{n_k \times n_k}$ is the diagonal degree matrix with entries $D_{i,i} = \sum_j A_{j,i}$ and $I$ is the identity matrix. Since the Laplacian is positive semidefinite, %it has positive eigenvalues and 
it can be decomposed into $L=U \Sigma U^T$ where $U$ is the eigenvectors matrix and $\Sigma=diag(\blambda)$ where $\blambda$ denotes the vector of the positive eigenvalues. The graph Fourier transform of any unidimensional signal on graph is defined by
$x_{ft}=U^\top x$ and its inverse is given by $x=U x_{ft}$. By transposing the convolution theorem to graphs, the spectral filtering in the frequency domain can be defined by %Eq.~\ref{eq:Eqfilter}.
\begin{equation}
  \label{eq:Eqfilter}
  x_{filtered} =  U diag(\digamma(\blambda)) U^\top x,
\end{equation}
where $\digamma(\blambda)$ is the desired filter function applied to the eigenvalues $\blambda$. 
As a consequence, a graph convolution layer in spectral domain can be written by a sum of filtered signals followed by an activation function as in [7], namely % by Eq.~\ref{eq:Eq4}.
\begin{equation}
  \label{eq:Eq4}
  H_j^{(l+1)} = \sigma \left( \sum_{i=1}^{f_l} U diag (F_{i,j,l}) U^\top
  H_i^{(l)} \right), %\forall j \in \{ 1, \dots, f_{l+1}\}
\end{equation}

for all $j \in \{ 1, \dots, f_{l+1}\}$. Here, $\sigma$ is the activation function such as \verb|RELU| (REctified Linear Unit), $H_i^{(l)}$ is the $i$-th feature vector of the $l$-th layer,  $F_{i,j,l} \in \mathbb{R}^{n}$
is the corresponding  weight vector whose size is the number
of eigenvectors (also $n$, the number of nodes). A spectral ConvGNN based on \eqref{eq:Eq4} seeks to tune the trainable parameters $F_{i,j,l}$, as proposed in [16] for the single-graph problem. 

A first drawback is the necessity of Fourier and inverse Fourier transform by matrix multiplication of $U$ and $U^T$. Another drawback occurs when generalizing the approach to multi-graph learning problems. Indeed, the $k$-th element of the vector $F_{i,j,l}$  weights the contribution of the $k$-th  eigenvector to the output. Those weights are not shareable between graphs of different sizes, which means a different length of $F_{i,j,l}$ is needed. Moreover, even though the graphs have the same number of nodes, their eigenvalues will be different if their structures differ. As a consequence, a given weight $F_{i,j,l}$ may correspond to different eigenvalues in different graphs. 

To overcome these issues, a few spatially-localized filters have been defined such as cubic B-spline parameterization [7] and polynomial parameterization [8]. With such approaches, trainable parameters are defined by:
\begin{equation}
\label{eq:Eq4b}
    F_{i,j,l} = B \, \left[W_{i,j}^{(l,1)}, \dots,W_{i,j}^{(l,S)} \right]^\top,
\end{equation}
where $B \in \mathbb{R}^{n \times S}$ is the initial designed matrix and $W^{(l,s)}$ is the trainable matrix for the $l$-th layer's $s$-th convolution kernel, $W_{i,j}^{(l,s)}$ is the $(i,j)$-th entry of $W^{(l,s)}$ and $S$ is the desired number of convolution kernels. Each column in $B$ is designed as a function of eigenvalues, namely $B_{i,j}=(\digamma_j(\lambda_i))$.
In the polynomial case, each column of $B$ %{\color{red}$B=\left[\digamma_1(\blambda) , \dots , \digamma_S(\blambda)\right]$} 
is power of eigenvalues starting at $0$-th and ending at $(S-1)$-th power% such as {\color{red}$\digamma_1(\blambda)=\blambda^0,...,\digamma_S(\blambda)=\blambda^{S-1} $}
. In the cubic B-spline case, the $B$ matrix encodes the cubic B-spline coefficients [7].   
A very recent ConvGNN named CayleyNet parameterizes trainable coefficients by $F_{i,j,l}=[g_{i,j,l}(\lambda_1,h),...,g_{i,j,l}(\lambda_n,h)]^{\top}$, where $h$ is a scale parameter to be learned, $\lambda_n$ is the $n$-th eigenvalue, and  $g$ is a spectral filter function defined as follows in [9]:
\begin{equation}
  \label{eq:Eqcy}
  g(\lambda,h)=c_0+2 Re\left( \sum_{k=1}^{r}c_k \left(\frac{h\lambda-\textbf{i}}{h\lambda+\textbf{i}}\right)^k  \right)
\end{equation}
\noindent where $\textbf{i}^2=-1$, $Re(\cdot)$ is the function returning the real part% of given complex number
, $c_0$ is a real trainable coefficient, and for $k=1,\ldots ,r$, $c_k$ are the complex trainable coefficients. The CayleyNet parameterization takes also the form \eqref{eq:Eq4b}, as shown in Appendix B.% in the supplementary material).

\subsubsection{Spatial ConvGNN}

Spatial ConvGNNs can be generalized as propagation of node features to the neighborhood nodes followed by activation function, of the form %\eqref{eq:Eq3}
\begin{equation}
  \label{eq:Eq3}
  H^{(l+1)} = \sigma \Big( \sum_s C^{(s)} H^{(l)} W^{(l,s)} \Big),
\end{equation}
where $H^{(l)} \in \mathbb{R}^{n \times f_l}$ is the $l$-th layer's feature matrix with $n$ nodes and $f_l$ features, $s$  indexes the convolution kernels, $C^{(s)}$ is the convolution kernel that defines how the node features are propagated to the neighborhood nodes, $W^{(l,s)} \in \mathbb{R}^{f_l \times f_{l+1}}$ is the trainable weight matrix that maps the $f_l$-dimensional features into $f_{l+1}$ dimensions. \figurename~\ref{fig:detail_gcnn} provides a detailed schematic of graph convolution layer on a sample graph signal.
The selection of convolution kernels defines the method in the literature. The vanilla version uses a single convolution kernel with $C=A+I$. Such a spatial ConvGNN has an effect of low-pass filtering, since it applies the same coefficients to all neighbors and to the node itself. High-pass filters can be obtained by differentiating the weight matrices used to compute neighbors and self-contributions [17]. In such a case, the convolution process is given by $C^{(1)}=A$ and $C^{(2)}=I$.  

\begin{figure}
  \centering
  \includegraphics[width=.95\textwidth]{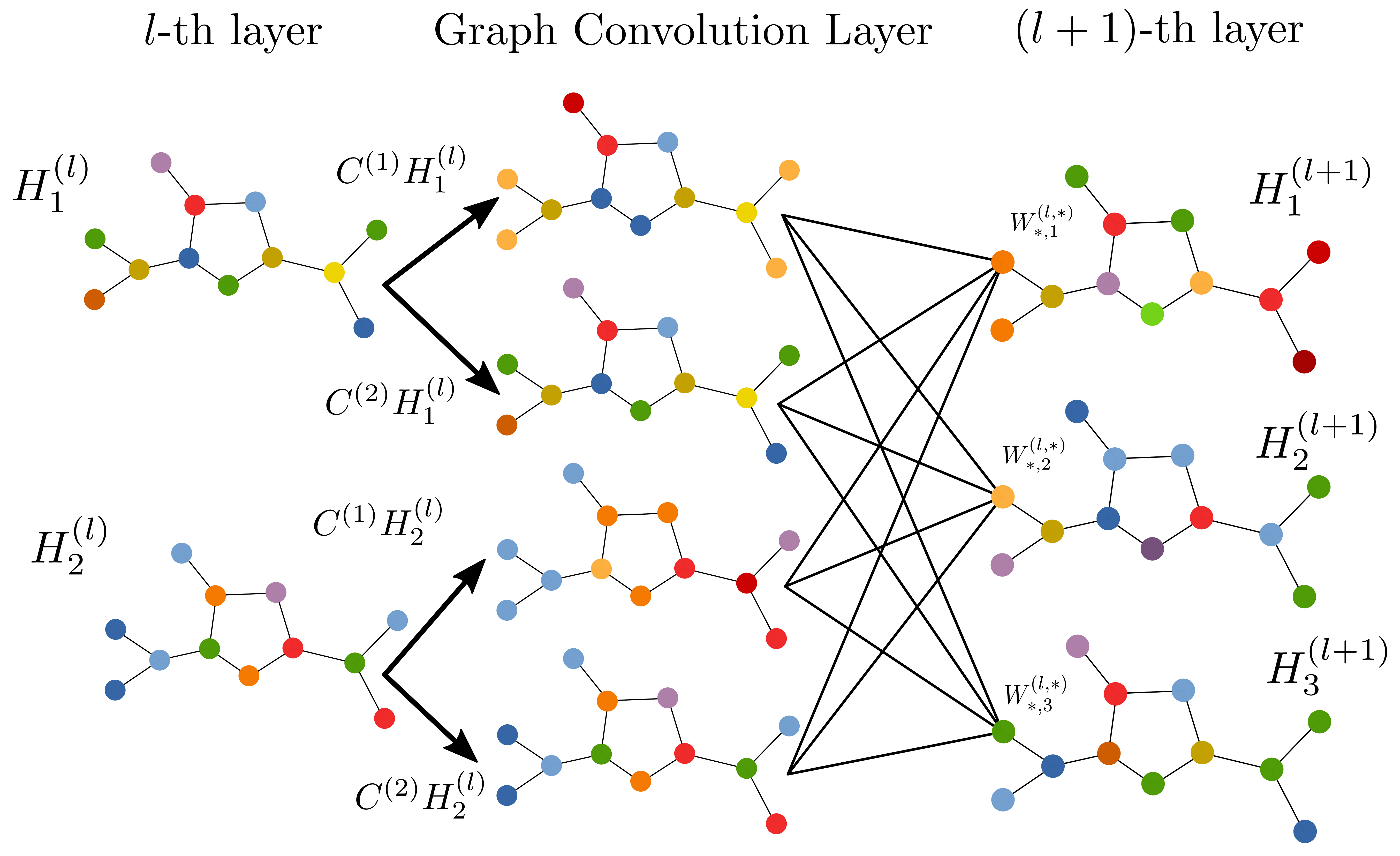}
  \caption{Schematic of the GCN layer defined in \eqref{eq:Eq3}. The graph has 12 nodes and 12 edges. Each node has a 2-length feature vector $H_1^{(l)}$ and $H_2^{(l)}$ represented by colors. The second layer has a 3-length feature vector, denoted $H_1^{(l+1)}$, $H_2^{(l+1)}$ and $H_3^{(l+1)}$. Two convolution kernels $C^{(1)}$ and $C^{(2)}$ are used. %Convolved signals are multiplied by the trainable weights and summed to obtain the next layer features. 
  This architecture has 12 trainable parameters, omitting biases.}
  \label{fig:detail_gcnn}
  \medskip
\end{figure}

% moved
Some solutions have been proposed to overcome the limitations of using only low-pass and high-pass filters. %For instance, 
If nodes have discrete labels (unless the  node's degree can be used as discrete feature), weights can be shared by the neighbors whose labels are the same [18]. 
Another method consists in defining an ordering on nodes included within the receptive field of convolution, and sharing the coefficients according to this reordering~[19]. The reordering process is called canonical node reordering. %The receptive field and node reordering defined in \cite{niepert16:_learn} are general and can be applied to arbitrary graphs. 
A similar sharing approach, based on reordered neighbors, was presented in [20]. The difference is that the reordering is computed according to the absolute correlation of features to the center node. %They proposed to share the same coefficient according to that ordered neighborhood. However, sharing weights according to correlation power does not seem to be natural: In analogy with image processing,  all left pixels are not necessary correlated to each other, as well as for the right pixels. 
A different spatial-designed method proposed in [21] considers a diffusion process on the graph using random walks. This allows to induce variability on output signal by applying random walks of different lengths to the different features. %Once again, one cannot give guaranty that this approach produces a frequency profile of output signal as wide as possible. 

All aforementioned spatial graph convolutions use fixed-design matrices $C^{(s)}$ and variability is induced by $W^{(l,s)}$ in \eqref{eq:Eq3}. Other methods use trainable convolution kernels in order to make the convolutions more productive in terms of output signal frequency profiles, such as graph attention networks [11,12], MoNet [23] and SplineCNN [24]. The attention mechanism tunes each element of the convolution kernel of the $l$-th layer $C^{(l,s)}$, which is defined as a function of connected nodes features and some trainable parameter% $W_{AT}^{(l,s)}$ by
\begin{equation}
  \label{eq:gat1}
  C^{(l,s)}_{i,j}=f\left(H^{(l)}_i,H^{(l)}_j,W_{AT}^{(l,s)}\right),
\end{equation}
where $H^{(l)}_i \in \mathbb{R}^{f_l}$ is the $i$-th node's feature vector for layer $l$, %(corresponding to the $i$-th row of $H^{(l)}$ defined in Eq.~\ref{eq:Eq1}), 
$W_{AT}^{(l,s)}$ encodes the trainable parameter of the $s$-th convolution kernel for layer $l$ and $f$ is some element-wise function to be selected.
In this case, since convolution kernels are learned through the $W_{AT}^{(l,s)}$ of \eqref{eq:gat1}, the trainable parameters  $W^{(l,s)}$ of \eqref{eq:Eq3} can be defined as the identity matrix, or other trainable parameters that may be shared with $W_{AT}^{(l,s)}$.
The most influential attention mechanism applied on graph data, called GAT [11], uses multi-attention weights (denoted as multi-support convolution kernels), %. % to create this variability. 
%In this work, they define the function $f$ as:
with 
\begin{equation}
  \label{eq:gat2}
  f \left( H^{(l)}_i, H^{(l)}_j, W_{AT}^{(l,s)} \right) \!=\! \verb|softmax|_j \left( \sigma(\mathbf{a}[\mathbf{W} H_i^{(l)} || \mathbf{W} H_j^{(l)}]) \right),
\end{equation}
where two %different 
linear transformations
are considered by elements of general trainable parameter
  set $W_{AT}^{(l,s)} \!=\! \{\mathbf{a},\mathbf{W}\}$, with $\mathbf{a}$ being a weight vector. The operator $||$ is the concatenation
operator, $\sigma$ corresponds to the $\verb|LeakyReLU|$
function~[25] and $\verb|softmax|_j$ is the normalized exponential function that uses all neighbors of $i$-th node to normalize edge of $i$-th to $j$-th node. In convolution layer's output calculation \eqref{eq:Eq3}, GAT proposes to use the same
parameters $\mathbf{W}$. 
The main limitation
of this  method is the use of a very small context, limited to the features of the pair of nodes, to determine the
intensity of the attention. 
Dual-Primal Graph CNN (DPGCNN)
[26] extends this approach by defining attention using new features
%to define attention. These new features are 
computed from the
neighborhood of each node of the pair, hence using a larger
context. 

Since the methods mentioned above are defined in the spatial domain,
they do not provide any analysis of their frequency spectrum of
filters. Moreover, their frequency responses will be different for
different graphs. Besides, they need more multi-support (attention or
sub-layer weights) to produce high variability output, which drastically increases the number of trainable parameters of the model.

\subsubsection{Spectral-rooted Spatial Convolutions}

As said before, some methods have recently been proposed to get rid of the computation burden of graph Fourier and inverse graph Fourier transforms, while still taking their foundations in the spectral domain. These solutions rely on the approximation of a spectral graph convolution proposed in~[27], %which is 
based on the Chebyshev polynomial expansion of the scaled graph Laplacian. Accordingly, the first two Chebyshev kernels are $C^{(1)}=I$ and
$C^{(2)}=2L/\lambda_{\max}-I$ and the remaining kernels are defined by 
\begin{equation}
  \label{eq:chebeq}
  C^{(k)}=2C^{(2)}C^{(k-1)} - C^{(k-2)}.
\end{equation}
Researchers have shown that any desired filter can be written as a linear combination of these kernels [27].
ChebNet is the first method that used these kernels in ConvGNN [8]. 

One major extension and simplification of the Chebyshev polynomial expansion method is Graph Convolution Network (GCN) [10]. %, was defined for the transductive learning problem. 
GCN uses the subtraction of the second Chebyshev kernels from the first one under the assumption of $\lambda_{\max}=2$ and $L$ is the normalized graph Laplacian.  
However, instead of using this subtracted kernel, they used re-normalization trick and defined
the final single kernel by:

\begin{equation}
  \label{eq:Eq14}
  C = \widetilde{D}^{-1/2}\widetilde{A} \widetilde{D}^{-1/2},
\end{equation}
with $\widetilde{D}_{i,i} = \sum_j \widetilde{A}_{i,j}$ and $\widetilde{A} = ( A + I)$ the adjacency matrix with added self-connections. 
This approach influenced many other contributions. The method
described in [28] directly uses this convolution but
changes the network architecture by adding a fully connected layer as the
last layer. %Another %recent contribution influenced by GCN,
%named 
The MixHop algorithm [29] uses the \nth{2} or \nth{3} powers of the same convolution. 

The methods described in this section are quite different from pure spatial and pure spectral convolutions. They are not designed by using eigenvalues, %On the contrary, t
but are implicitly designed as a function of structural information (adjacency, Laplacian) and perform convolution in spatial domain as how all spatial convolutions do. However, their frequency profiles are stable for different arbitrary graphs as how spectral convolutions do.  This aspect will be theoretically and experimentally illustrated in the following sections.  

\section{Bridging Spatial and Spectral ConvGNN}

This section presents the main theoretical contributions of this paper. First, we provide  a theoretical analysis demonstrating that parameterized spectral ConvGNNs can be implemented as spatial ConvGNNs when they use a fixed frequency profile matrix $B$. Then, using this result, some state-of-the-art GNNs described in the previous section are analyzed from a spectral point of view. This analysis provide a better understanding on these convolutions and reveal their problematic sides. Finally, we propose a new method that fully exploits spectral graph convolution capabilities, called Depthwise Separable Graph Convolution Network.

\subsection{Theoretical analysis}

\begin{theorem}
  \label{Th:th1}
  %\textcolor{red}{TODO : redefine all the terms}
  Spectral ConvGNN  parameterized with fixed frequency profiles matrix $B$ of entries $B_{i,j}=\digamma_j(\lambda_i)$, defined as %(Eq.~\ref{eq:Eq5b}):
  \begin{equation}
  \label{eq:Eq5b}
  H_j^{(l+1) } \!\!=\! \sigma \Big( \sum^{f_l}_{i=1} U diag \Big( B \, \Big[W _{i,j}^{(l,1)}, \dots, W_{i,j}^{(l,S)} \Big]^{\!\!\top} \Big) U^\top
  H_i^{(l)} \Big), %\\ \forall j \in \{1, \dots, f_{l+1} \}
  % put j definition in text
\end{equation}
is a particular case of spatial ConvGNN, defined as %(Eq.~\ref{eq:Eq3b}) :
\begin{equation}
  \label{eq:Eq3b}
  H^{(l+1)} = \sigma \Big( \sum_s C^{(s)} H^{(l)} W^{(l,s)} \Big),
\end{equation}
with the convolution kernel set to
\begin{equation}
  \label{eq:Eq9a}
  C^{(s)} = U diag (\digamma_s(\blambda)) U^\top,
\end{equation}
where the columns of $U$ are the eigenvectors of the studied graph, % with $n$ nodes, 
$\sigma$ is the activation function, $H^{(l)} \in \mathbb{R}^{n \times f_l}$ is the $l$-th layer's feature matrix with $f_l$ features, $H_i^{(l)}$ is the $i$-th column of $H^{(l)}$,  $B \in \mathbb{R}^{n \times S}$ is an apriori designed matrix for each graph's
eigenvalues, and $\digamma_s(\blambda)$ is the $s$-th column of $B$. Both $W^{(l,s)}$ and $S$ are defined in \eqref{eq:Eq4b}. 
\end{theorem}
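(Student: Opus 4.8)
The plan is to prove this by a direct algebraic manipulation, starting from the spectral update \eqref{eq:Eq5b} and transforming it step by step into the spatial update \eqref{eq:Eq3b}. The whole argument rests on two elementary facts: the linearity of the $diag(\cdot)$ operator, and the observation that the per-column formula for $H_j^{(l+1)}$ can be reassembled into a single matrix equation once the summation order is rearranged. No eigendecomposition properties beyond the given factorizations are needed; in particular, $U$ and $\blambda$ are simply carried along symbolically, so the $C^{(s)}$ of \eqref{eq:Eq9a} are treated as fixed matrices.

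First I would examine the argument of $diag$ in \eqref{eq:Eq5b}. Since the columns of $B$ are exactly $\digamma_s(\blambda)$, the matrix-vector product expands as a scalar-weighted sum of columns,
\begin{equation}
B \, \Big[W_{i,j}^{(l,1)}, \dots, W_{i,j}^{(l,S)}\Big]^{\top} = \sum_{s=1}^{S} W_{i,j}^{(l,s)}\, \digamma_s(\blambda).
\end{equation}
Because each $W_{i,j}^{(l,s)}$ is a scalar and $diag$ is linear, this yields $diag\big(\sum_s W_{i,j}^{(l,s)} \digamma_s(\blambda)\big) = \sum_s W_{i,j}^{(l,s)}\, diag(\digamma_s(\blambda))$. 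Substituting this back into \eqref{eq:Eq5b}, inserting the definition \eqref{eq:Eq9a} of $C^{(s)} = U\,diag(\digamma_s(\blambda))\,U^\top$, and interchanging the finite sums over $i$ and $s$, I obtain the intermediate scalar-indexed form
\begin{equation}
H_j^{(l+1)} = \sigma\Big( \sum_{s=1}^{S} \sum_{i=1}^{f_l} W_{i,j}^{(l,s)}\, C^{(s)} H_i^{(l)} \Big).
\end{equation}

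The last step is to recognize that the inner sum is a column of a matrix product. Since $H_i^{(l)}$ is the $i$-th column of $H^{(l)}$ and $W_{i,j}^{(l,s)}$ is the $(i,j)$ entry of $W^{(l,s)}$, the identity $\sum_{i=1}^{f_l} H_i^{(l)} W_{i,j}^{(l,s)}$ is precisely the $j$-th column of $H^{(l)} W^{(l,s)}$, so $\sum_i W_{i,j}^{(l,s)} C^{(s)} H_i^{(l)}$ is the $j$-th column of $C^{(s)} H^{(l)} W^{(l,s)}$. Applying this for every $j \in \{1,\dots,f_{l+1}\}$ and stacking the columns reconstitutes the full matrix equation \eqref{eq:Eq3b}, using that $\sigma$ acts entrywise. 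I expect the only delicate part to be this index bookkeeping: one must track carefully that the column index $j$ of the spectral output matches the output-feature index of $W^{(l,s)}$, and that the eigenvector factors $U, U^\top$ in \eqref{eq:Eq9a} are exactly what converts the spectral diagonal action into the spatial kernel $C^{(s)}$. Everything else is routine linearity, so the statement follows as an exact identity rather than an approximation.
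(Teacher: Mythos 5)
Your proposal is correct and follows essentially the same route as the paper's own proof: expanding $B$ times the weight vector as a scalar-weighted sum of columns, pulling the scalars out of $diag(\cdot)$ by linearity, interchanging the sums over $i$ and $s$, and reassembling the per-column expressions into the matrix equation \eqref{eq:Eq3b}. If anything, your final step is slightly more explicit than the paper's, since you spell out that $\sum_i W_{i,j}^{(l,s)} C^{(s)} H_i^{(l)}$ is exactly the $j$-th column of $C^{(s)} H^{(l)} W^{(l,s)}$, where the paper simply asserts the regrouping.
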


\begin{proof}
First, let us expand the matrix $B$ and rewrite it as the sum of its columns, denoted $\digamma_1(\blambda), \dots, \digamma_S(\blambda) \in \mathbb{R}^n$:
\begin{equation}
  \label{eq:Eq6}
  H_j^{(l+1) } = \sigma\left( \sum^{f_l}_{i=1} U diag \Big( \sum_{s=1}^S W
  _{i,j}^{(l,s)}\digamma_s(\blambda)  \Big) U^\top  H_i^{(l)} \right).
\end{equation}
Now, we distribute $U$ and $U^\top$ over the inner summation: 
\begin{equation}
  \label{eq:Eq7}
  H_j^{(l+1) } = \sigma\left( \sum_{s=1}^S  \sum^{f_l}_{i=1} U diag \Big( W
  _{i,j}^{(l,s)}\digamma_s(\blambda) \Big) U^\top H_i^{(l)}  \right).
\end{equation}
Then, %since $W_{i,j}^{(l,s)} \in \mathbb{R}$, 
we take out the scalars $W_{i,j}^{(l,s)}$ of the $diag$ operator:
\begin{equation}
  \label{eq:Eq8}
  H_j^{(l+1) } = \sigma\left( \sum_{s=1}^S \sum^{f_l}_{i=1}
  W_{i,j}^{(l,s)} U diag (\digamma_s(\blambda))U^\top H_i^{(l)} \right).
\end{equation}
Let us define a convolution operator $C^{(s)} \in \mathbb{R}^{n \times n}$ as:
\begin{equation}
  \label{eq:Eq9}
  C^{(s)} = U diag (\digamma_s(\blambda)) U^\top.
\end{equation}
Using \eqref{eq:Eq8} and \eqref{eq:Eq9}, we have thus:
\begin{align}
  \label{eq:Eq10}
  H_j^{(l+1) }% & = \sigma\left( \sum_{s=1}^S \sum^{f_l}_{i=1}
               %  W_{i,j}^{(l,s)} C^{(s)}  H_i^{(l)} \right) \\
  & = \sigma\left(  \sum^{f_l}_{i=1} \sum_{s=1}^S 
                 W_{i,j}^{(l,s)} C^{(s)}  H_i^{(l)} \right).
\end{align}
Then, each term of the sum over $s$ corresponds to  a matrix $ H^{(l+1)} \in \mathbb{R}^{n \times f_{l+1}}$ with
\begin{equation}
  \label{eq:Eq11}
  H^{(l+1)} = \sigma \left( C^{(1)}H^{(l)}W^{(l,1)} + \dots + C^{(S)}H^{(l)}W^{(l,S)} \right),
\end{equation}
with 
$H^{(l)} = [H_1^{(l)}, \ldots , H_{f_l}^{(l)}]$. 
We get by grouping the terms:
\begin{equation}
  \label{eq:Eq12}
  H^{(l+1)} = \sigma \left( \sum_{s=1}^S C^{(s)}H^{(l)}W^{(l,s)} \right),
\end{equation}
which corresponds to \eqref{eq:Eq3b}.
Therefore, \eqref{eq:Eq5b} corresponds to \eqref{eq:Eq3b} with $C^{(s)}$ defined as \eqref{eq:Eq9}.
\end{proof}

This theorem is general, since it covers many well-known spectral ConvGNNs, %which uses fixed frequency response spectral convolutions
such as non-parametric spectral graph convolution [16], polynomial parameterization [8], cubic B-spline parameterization [7] and CayleyNet [9].

From Theorem~\ref{Th:th1}, designing a graph convolution either in spatial or in spectral domain is equivalent. Therefore, Fourier calculations are not necessary when convolutions are parameterized by an initially designed matrix $B$. 
Using that relation, it is not difficult to show the spatial equivalence of non-parametric spectral graph convolution defined in \eqref{eq:Eq4}. It can be written in spatial domain with $B=I$ in \eqref{eq:Eq4b}. It thus corresponds to~\eqref{eq:Eq3b} where each convolution kernel is defined by $C^{(s)}=U_s U_s^{\top}$, where $U_s$ is the $s$-th eigenvector. 

\subsection{Spectral Analysis of Existing Graph Convolutions}
\label{secSAGC}

This section aims at providing a deeper understanding of the graph convolution process through an analysis of existing GNNs in the spectral domain. 
To the best of our knowledge, no one has led such an analysis concerning graph convolutions in the literature. In this section, we show how it can be done on four well-known graph convolutions: ChebNet~[8], CayleyNet~[9], GCN~[10] and GAT~[11]. This analysis is led using the following corollary of Theorem~\ref{Th:th1}. 

\begin{corollary}
\label{cor:frequency_profile}
The frequency profile of any given graph convolution kernel $C^{(s)}$ can be defined in spectral domain by the vector % $\digamma_s(\blambda)$  as 
\begin{equation}
  \label{eq:Eq13t}
  \digamma_s(\blambda) = diag^{-1} ( U^\top C^{(s)} U ).
\end{equation}
\end{corollary}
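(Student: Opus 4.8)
The plan is to read the corollary as the inversion of the key identity established in Theorem~\ref{Th:th1}, namely \eqref{eq:Eq9a}, $C^{(s)} = U \, diag(\digamma_s(\blambda)) \, U^\top$. The only structural fact I need is that the eigenvector matrix $U$ of the symmetric graph Laplacian is orthonormal, so that $U^\top U = U U^\top = I$. With this in hand, recovering the filter function from the kernel is a one-line algebraic manipulation, and no new machinery is required.

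Concretely, I would left-multiply \eqref{eq:Eq9a} by $U^\top$ and right-multiply by $U$, then use $U^\top U = I$ twice to obtain
\[
U^\top C^{(s)} U = (U^\top U)\, diag(\digamma_s(\blambda))\, (U^\top U) = diag(\digamma_s(\blambda)).
\]
Since $diag^{-1}$ denotes the operator that reads off the diagonal of a matrix as a vector, and $diag^{-1}(diag(v)) = v$ for every $v \in \mathbb{R}^n$, applying $diag^{-1}$ to both sides yields exactly $\digamma_s(\blambda) = diag^{-1}(U^\top C^{(s)} U)$, which is \eqref{eq:Eq13t}.

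The step requiring care --- and the one that makes this a genuine \emph{definition} rather than a mere restatement --- is what happens for convolution kernels $C^{(s)}$ that do \emph{not} a priori take the spectral form \eqref{eq:Eq9a}. When $C^{(s)}$ is a function of the Laplacian (for instance a polynomial, as in the Chebyshev and GCN kernels), it shares the eigenbasis $U$, so $U^\top C^{(s)} U$ is genuinely diagonal and the formula is exact. For an arbitrary spatial kernel, however, $U^\top C^{(s)} U$ generally carries nonzero off-diagonal entries that couple distinct eigenvectors, and the operator $diag^{-1}$ silently discards them. The point I expect to be the real content is therefore interpretive rather than computational: one must argue that retaining only the diagonal is the correct notion of \emph{frequency profile}. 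This is justified by observing that the $k$-th diagonal entry equals $U_k^\top C^{(s)} U_k$, the scalar gain that $C^{(s)}$ applies along the $k$-th Fourier mode $U_k$, so $\digamma_s(\blambda)$ records how the kernel amplifies or attenuates each frequency, consistently with the spectral filtering of \eqref{eq:Eqfilter}. It is precisely this reading that licenses using \eqref{eq:Eq13t} in Section~\ref{secSAGC} to extract and compare frequency responses across heterogeneous, spatially-designed kernels.
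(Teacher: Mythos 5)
Your proof is correct and follows essentially the same route as the paper's: invert \eqref{eq:Eq9a} using the orthonormality of $U$ (i.e., $U^{-1}=U^\top$) and read off the diagonal. Your additional remark about off-diagonal entries for arbitrary spatial kernels is not in the paper's proof itself, but it matches the discussion immediately following it, where the paper distinguishes the full frequency profile $\digamma_s = U^\top C^{(s)} U$ from the standard (diagonal) one.
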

\begin{proof}
By using \eqref{eq:Eq9a} from Theorem~\ref{Th:th1}, we can obtain a spatial convolution kernel $C^{(s)}$ whose frequency profile is $\digamma_s(\blambda)$. Since the eigenvector matrix is orthonormal (i.e., $U^{-1}=U^\top$), we can extract $\digamma_s(\blambda)$, which yields \eqref{eq:Eq13t}.% by:
% \begin{equation}
%   \label{eq:Eq13}
%   \digamma_s(\blambda) = \diag( U^\top C^{(s)} U ),
% \end{equation}
\end{proof}

We denote the matrix $\digamma_s=U^\top C^{(s)} U$ as the full frequency profile of the convolution kernel $C^{(s)}$, and $\digamma_s(\blambda)=diag(\digamma_s)$ as the standard frequency profile of the convolution kernel. The full frequency profile includes all eigenvector-to-eigenvector pairs contributions. Standard frequency profile just includes each eigenvector's self-contribution. 

To show the frequency profiles of some well-known graph convolutions, we used three graphs. The first one corresponds to a 1D signal encoded as a regular circular line graph with 1001 nodes. The second and third ones are the Cora and Citeseer reference datasets, which consist of one single graph with respectively 2708 and 3327 nodes~[12]. Basically, each node of these graphs is labeled by a vector, and edges are unlabeled and undirected. These two graphs will be described in details in Section~\ref{sec:xp}.

\subsection*{ChebNet}

\begin{figure}
\centering
\includegraphics[width=.55\textwidth]{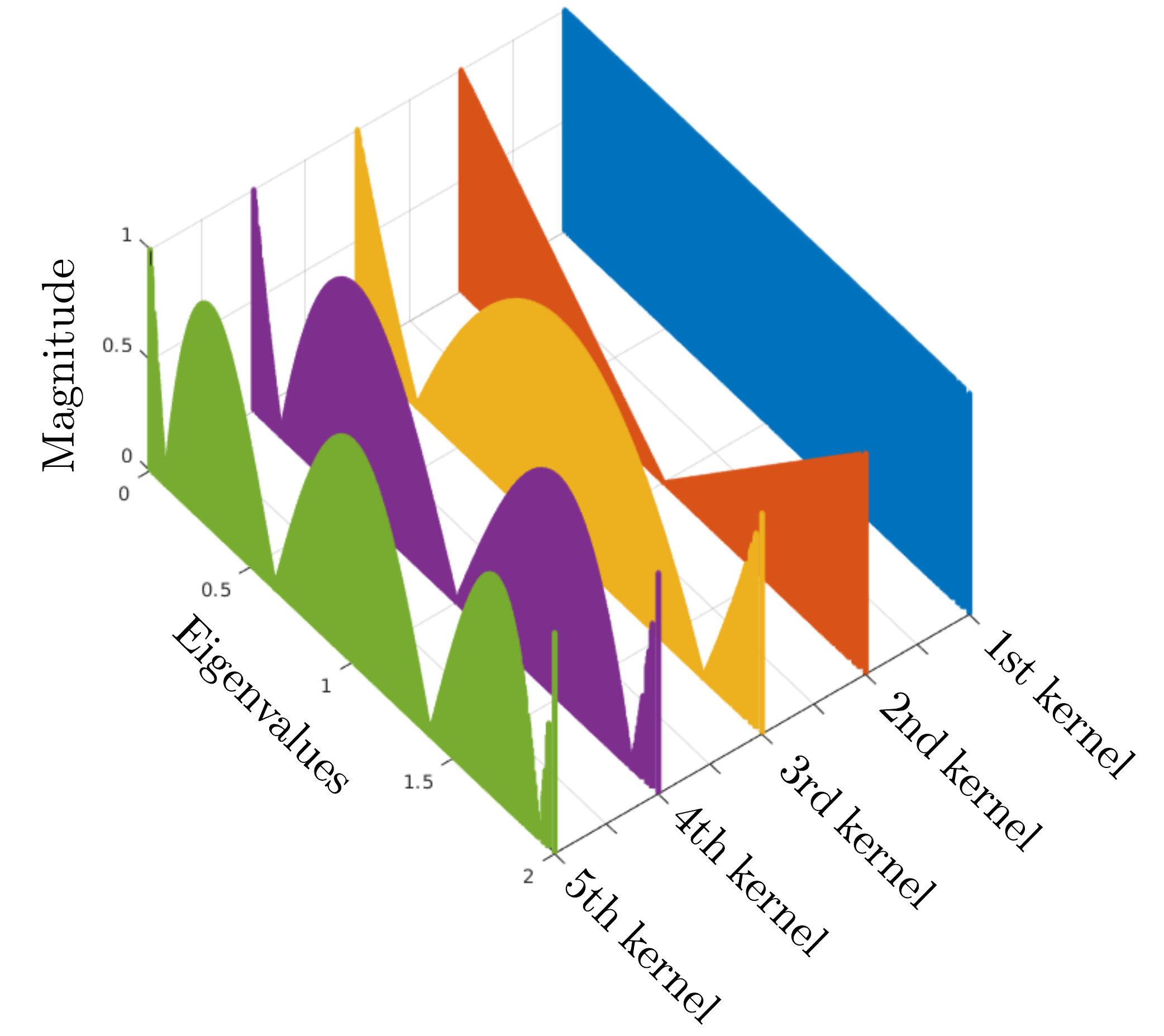}
\medskip
\caption{Standard frequency profiles of first 5 Chebyshev convolutions.
}
\label{fig:chebfreq}
\bigskip
\end{figure}

%Empirically, a
After computing the kernels of ChebNet by \eqref{eq:chebeq}, Corollary~\ref{cor:frequency_profile} can be used to obtain their frequency profiles. As shown in Appendix A, the first two kernel frequency profiles of ChebNet are $\digamma_{1}(\blambda)={\bf1}$ and $\digamma_{2}(\blambda)=2\blambda/\lambda_{\max}-{\bf1}$, where {\bf1} is the vector of ones. Since $\lambda_{\max}=2$ for all three graphs, %the second kernel frequency profile %for all three graphs is given by
we get $\digamma_{2}(\blambda)=\blambda-{\bf1}$. The third one and following kernel frequency profiles can also be computed using $\digamma_{k}(\blambda)=2\digamma_{2}(\blambda)\digamma_{k-1}(\blambda) - \digamma_{k-2}(\blambda)$, leading to $\digamma_{3}(\blambda)=\blambda^2-4\blambda+{\bf1}$ for example for the third kernel. 
The resulting 5 frequency profiles are shown in \figurename~\ref{fig:chebfreq} (in absolute value).  
Since the full frequency profiles consist of zeros outside the diagonal, they are not illustrated.

Analyzing the frequency profile of ChebNet, one can argue that the convolutions mostly cover the spectrum. However, none of the kernels focuses on some certain parts of the spectrum. As an example, the second kernel is mostly a low-pass and high-pass filter and stops the middle band, while the third one passes very high, very low and middle bands, but stops almost first and third quarter of the spectrum. Therefore, if the relation between input-output pairs can be figured out by just a low-pass, high-pass or some specific band-pass filter, a high number of convolution kernels is needed. However, in the literature, only 2 or 3 kernels are generally used for experiments~[8,10]. 

\subsection*{CayleyNet}

\begin{figure}[!t]
\centering
\includegraphics[width=.55\textwidth]{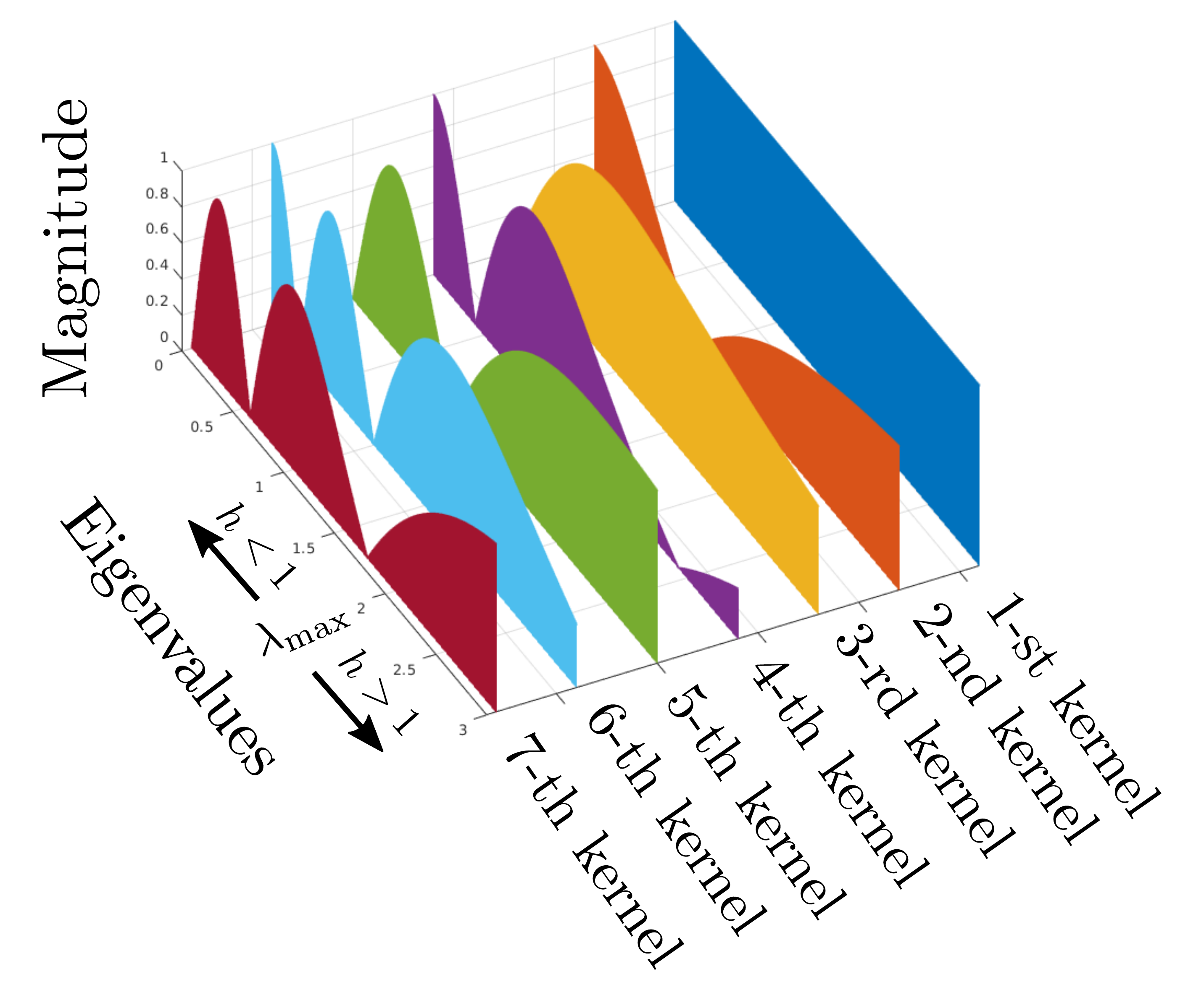}
\caption{Standard frequency profiles of first 7 CayleyNet convolutions. 
}
\label{fig:cayleyfreq}
\end{figure}
CayleyNet uses spectral graph convolutions whose frequency profiles can be changed by scaling eigenvalues [9]. The frequency profile is defined by a complex rational function of eigenvalues, scaled by a trainable parameter $h$ in \eqref{eq:Eqcy}. 

As proven in Appendix B, %Through theoretical developments, 
CayleyNet can be defined through the frequency profile matrix $B$. % (see supplementary material, appendix B).
Using this representation, CayletNet can be seen as multi-kernel convolutions with real-valued trainable coefficients. According to this analysis, CayleyNet uses $2r+1$ graph convolution kernels, with $r$ being the number of complex coefficients [9]. The first 7 kernel's frequency profiles are illustrated in \figurename~\ref{fig:cayleyfreq}. The scale parameter $h$ affects the x-axis scaling but does not change the global shape. When $h=1$, frequency profiles can be defined within the range $[0,2]$ (because $\lambda_{\max}=2$ in all three test graphs). If $h=1.5$, the frequency profile can be defined till $1.5\lambda_{\max}=3$ in \figurename~\ref{fig:cayleyfreq} and rescale axis label from $[0,3]$ to $[0,2]$ in original range.

Learning the scaling of eigenvalues may seem advantageous. However, it induces extra computational cost in order to calculate the new convolution kernel. To limit this cost, an approximation is computed using a fixed number of Jacobi iterations~[9].
 
In addition, similarly to ChebNet, CayleyNet does not have any band specific convolutions, even when considering different scaling factors. 

\subsection*{GCN } 
\label{subsecGCN}

\begin{figure}[t]
  \centering
  \subfigure[Standard frequency profiles]{
    \includegraphics[width=.43\textwidth]{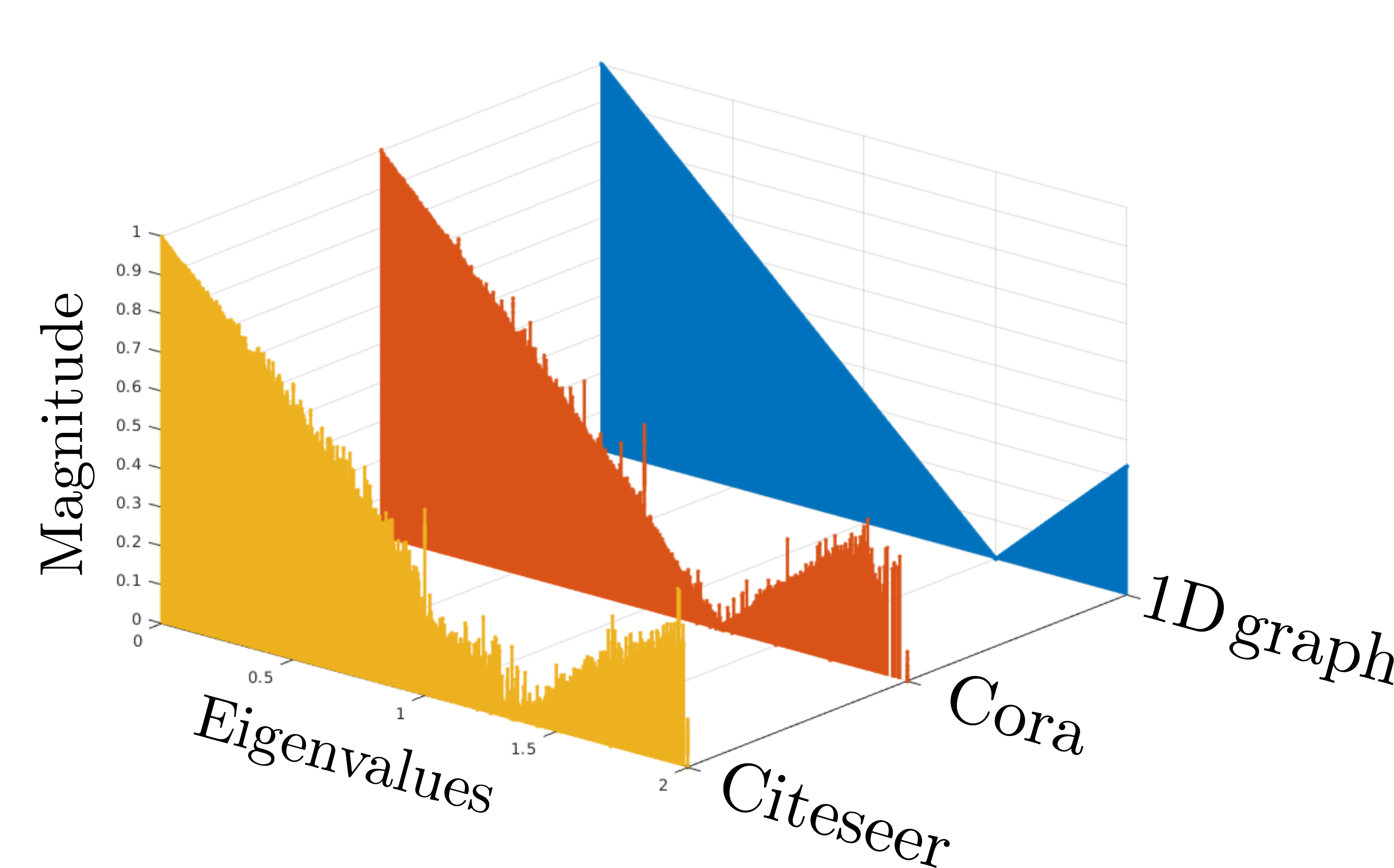}
  }
  \subfigure[Full frequency profile on  1D regular line graph]{
    \includegraphics[width=.43\textwidth]{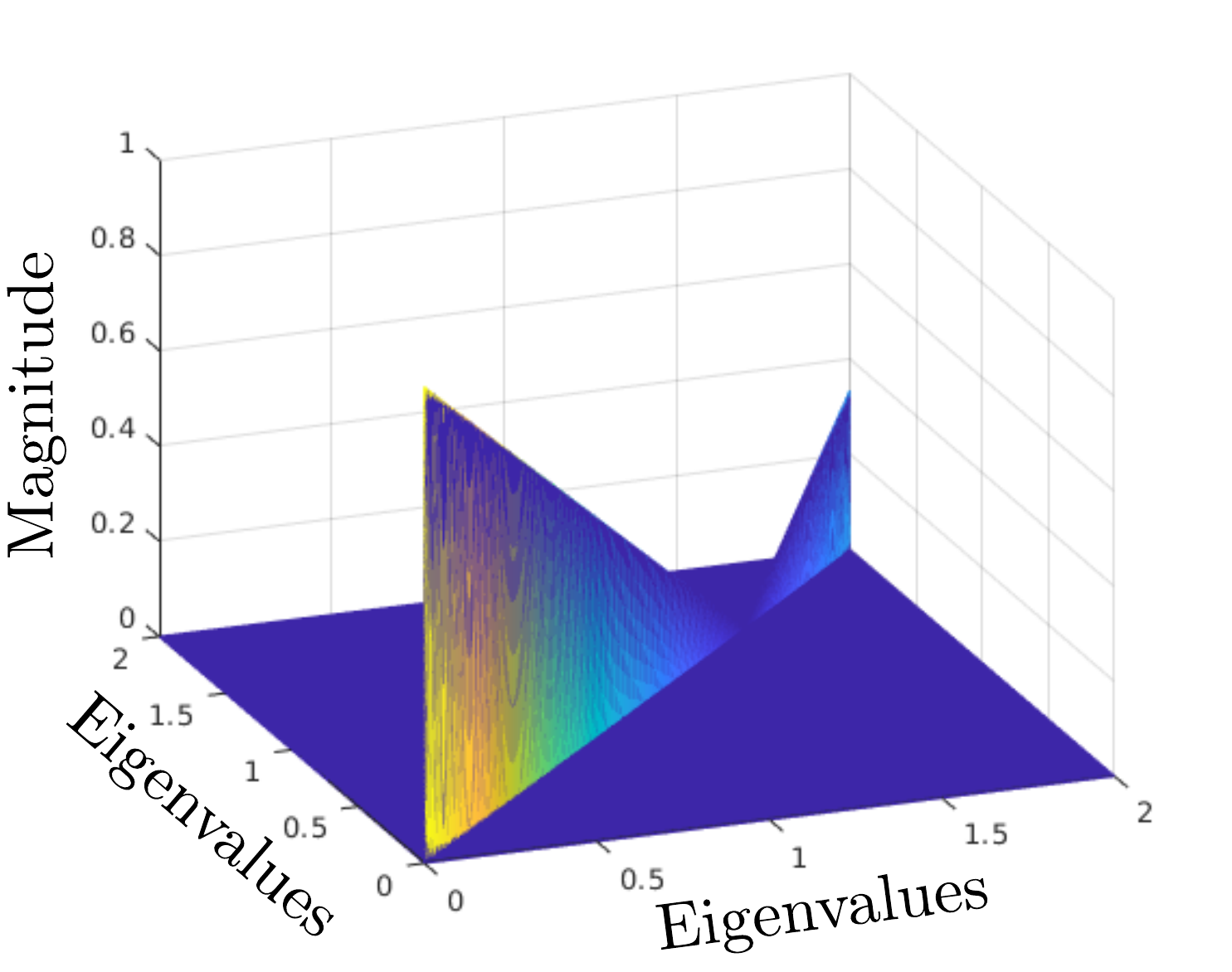}
  }
  \subfigure[Full frequency profile on Cora]{
    \includegraphics[width=.43\textwidth]{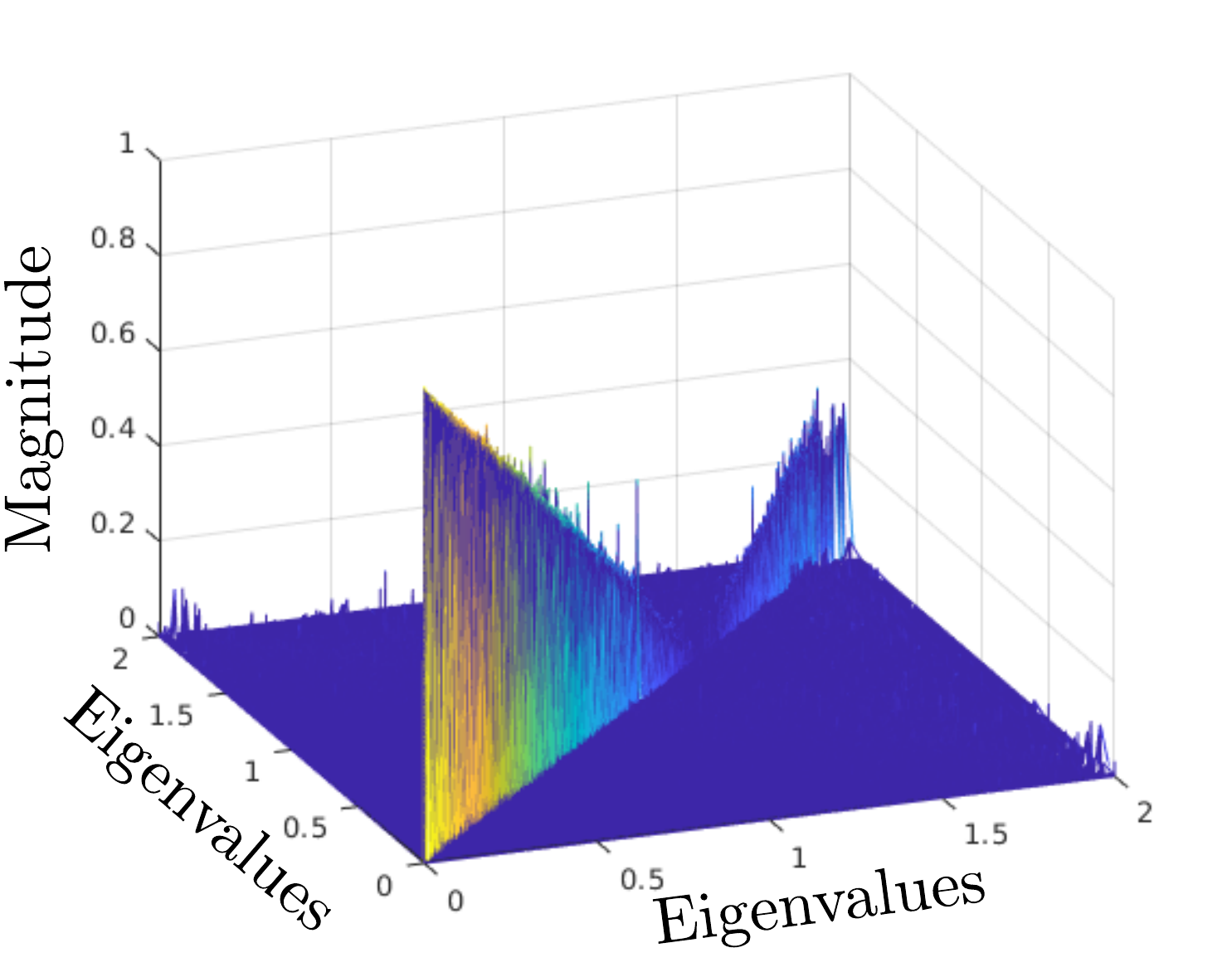}
  }
  \subfigure[Full frequency profile on Citeseer]{
    \includegraphics[width=.43\textwidth]{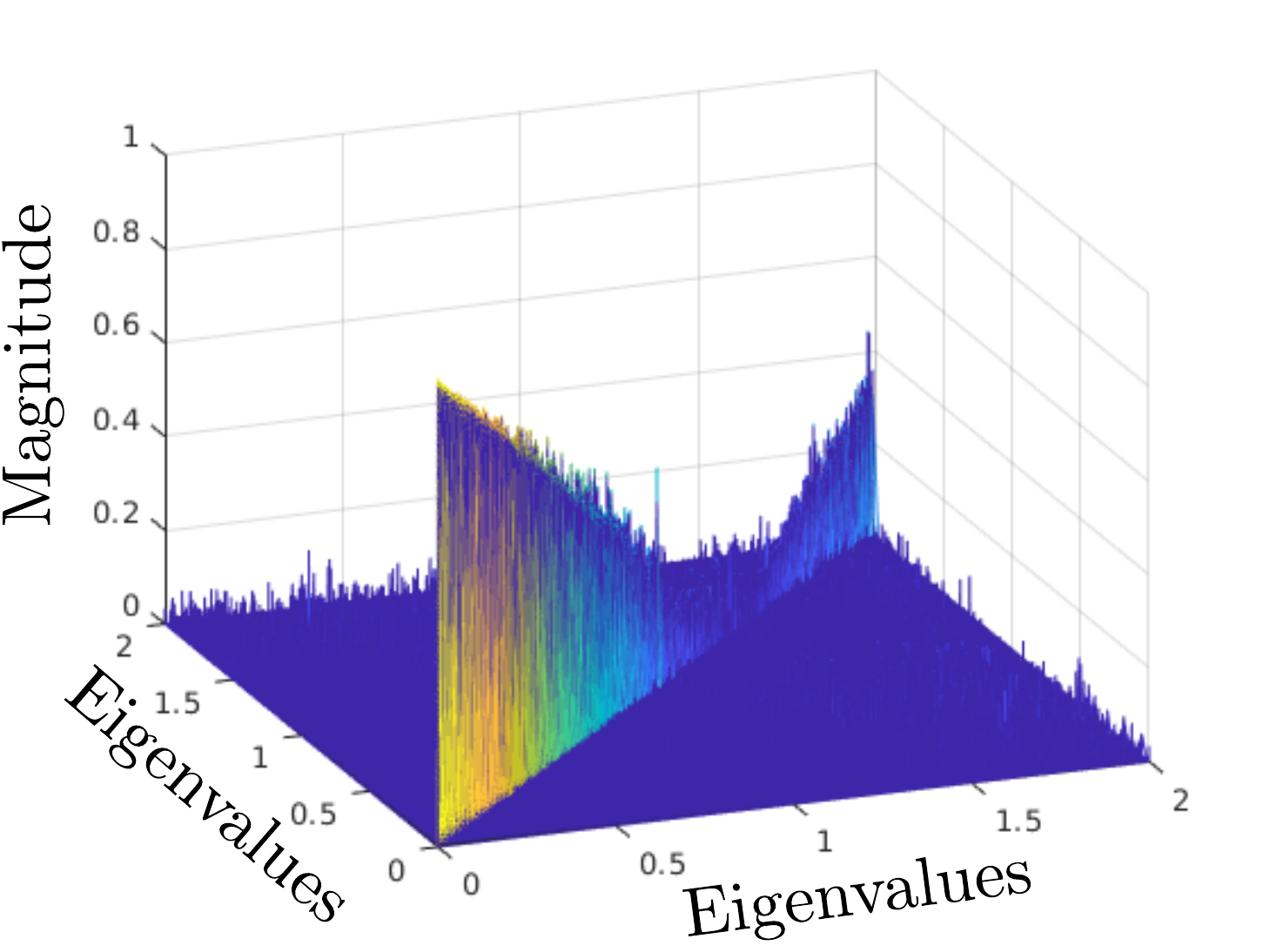}
  }
  \caption{Frequency profiles of GCN on different graphs.}
  \label{fig:gcnnfreq}
  \medskip
\end{figure}

As for ChebNet, a theoretical analysis of frequency profiles of GCN convolution is carried out in Appendix \ref{section:gcndetails}. It shows that GCN frequency profile can be approximated according to $\digamma~(\blambda) \approx {\bf1}-\blambda\overline{d}/(\overline{d}+1)$, where $\overline{d}$ is the average node degree. Therefore, the cut-off frequency of the GCN convolution is $\lambda_{\text{cut}}\approx(1+\overline{d})/\overline{d}$. Theoretically, if all  nodes degree are different, standard frequency profile will not be smooth and will include some perturbations. In addition, full frequency profile will be composed of non-zero components.  

Analyzing experimentally the behavior of GCN~[10] in the spectral domain first implies to compute the convolution kernel as given in~\eqref{eq:Eq14}. Then, the spectral representation of the obtained convolution matrix can be back-calculated using Corollary~\ref{cor:frequency_profile}. This result leads to the frequency profiles illustrated in \figurename~\ref{fig:gcnnfreq} for the three different graphs. The three standard frequency profiles have almost the same low-pass filter shape corresponding to a function composed of a decreasing part on the three first quarters of the eigenvalues range, followed by an increasing part on the remaining range. This observation is coherent with the theoretical analysis. Hence, kernels used in GCN are transferable across the three graphs at hand. In \figurename~\ref{fig:gcnnfreq}, the cut-off frequency of the 1-D linear circular graph is exactly 1.5, while it is about 1.35 for Citeseer. This observation can be explained by the fact that when considering a 1-D linear circular graph, all nodes have a degree equal to 2, hence  $\lambda_{\text{cut}}=%(1+2)/2=
1.5$. Since the average node degree in Citeseer is 2.77, therefore $\lambda_{\text{cut}} \approx %(1+2.77)/2.77=
1.36$.

Concerning the full frequency profiles, there is no contribution outside the diagonal for the regular line graph (\figurename ~\ref{fig:gcnnfreq} b). Conversely, some off-diagonal values are not null for Citeseer and Cora. Again, this observation confirms the  theoretical analysis. 

Since GCN frequency profile does not cover the whole spectrum, such an approach is not able to learn relations that can be represented by high-pass or band-pass filtering. Hence, even though it gives very good results on a single graph node classification problem in [10], it may fail for problems where discriminant information lies in particular frequency bands. Therefore, such an approach can be considered as problem specific. 

\subsection*{GAT }
\begin{figure*}
  \centering

  \subfigure[Standard frequency profile] {
    \includegraphics[width=.31\textwidth]{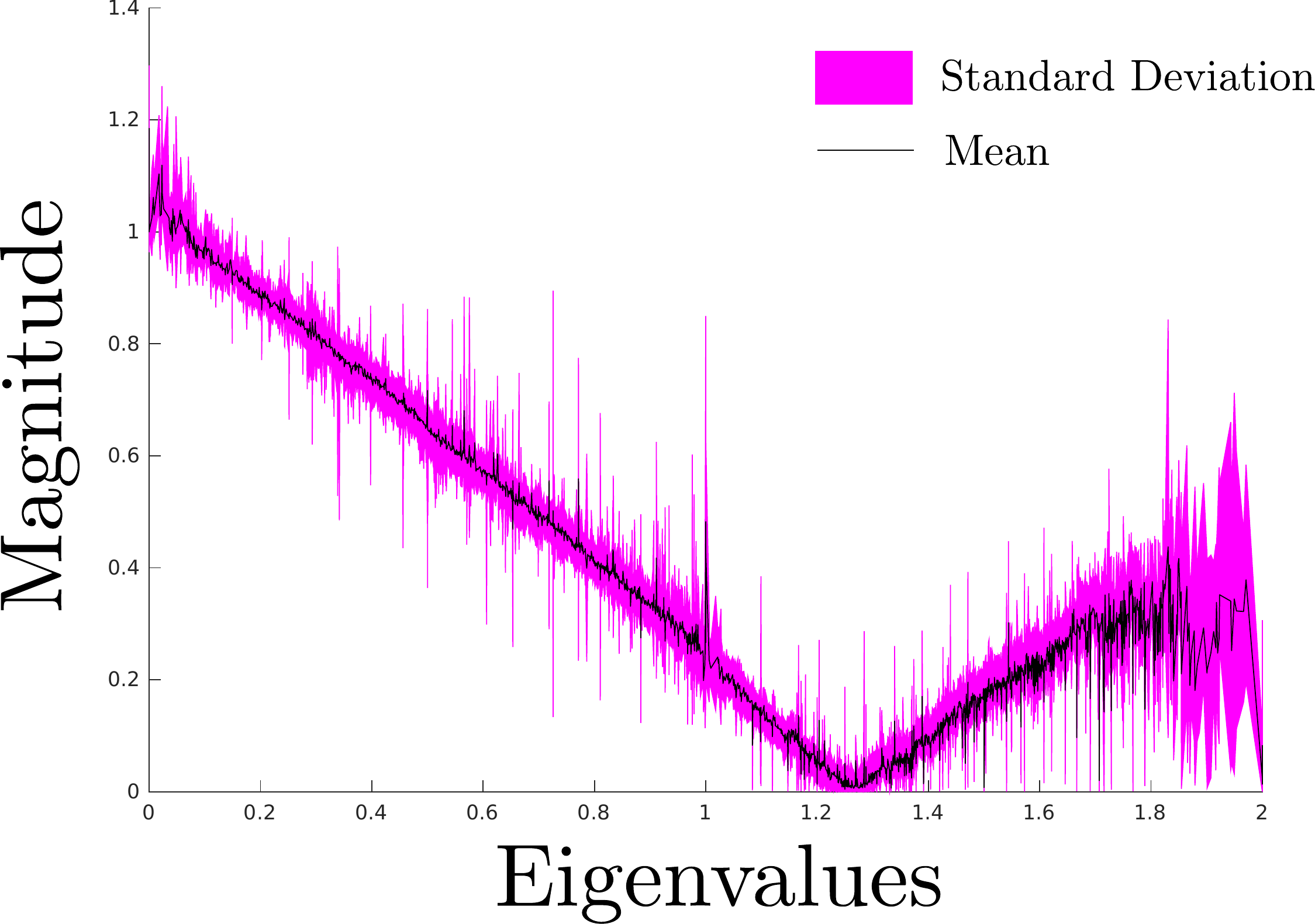}
  }
  \subfigure[Mean of full frequency profile] {
    \includegraphics[width=.31\textwidth]{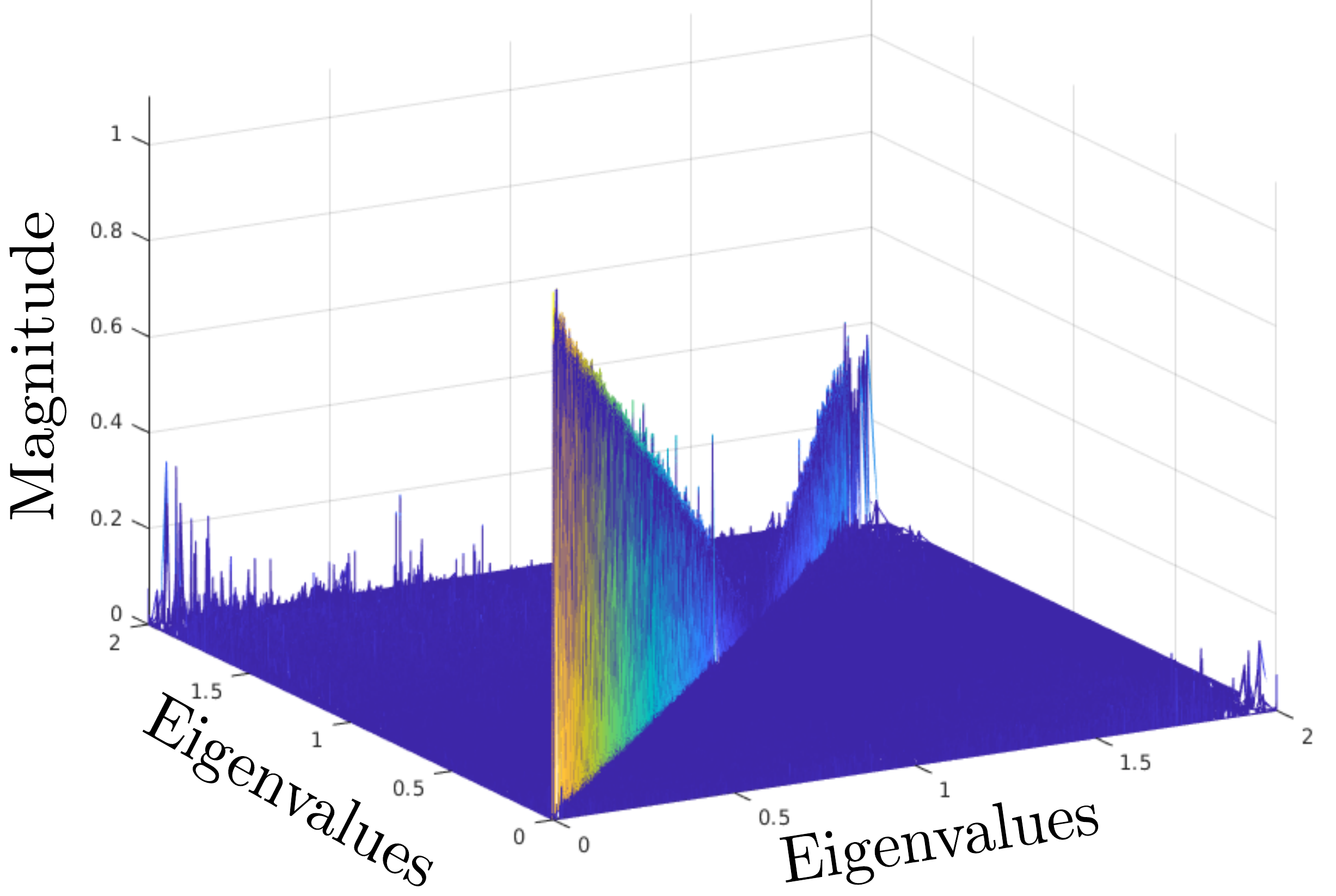}
  }
  \subfigure[Standard deviation of full frequency profile] {
   \includegraphics[width=.31\textwidth]{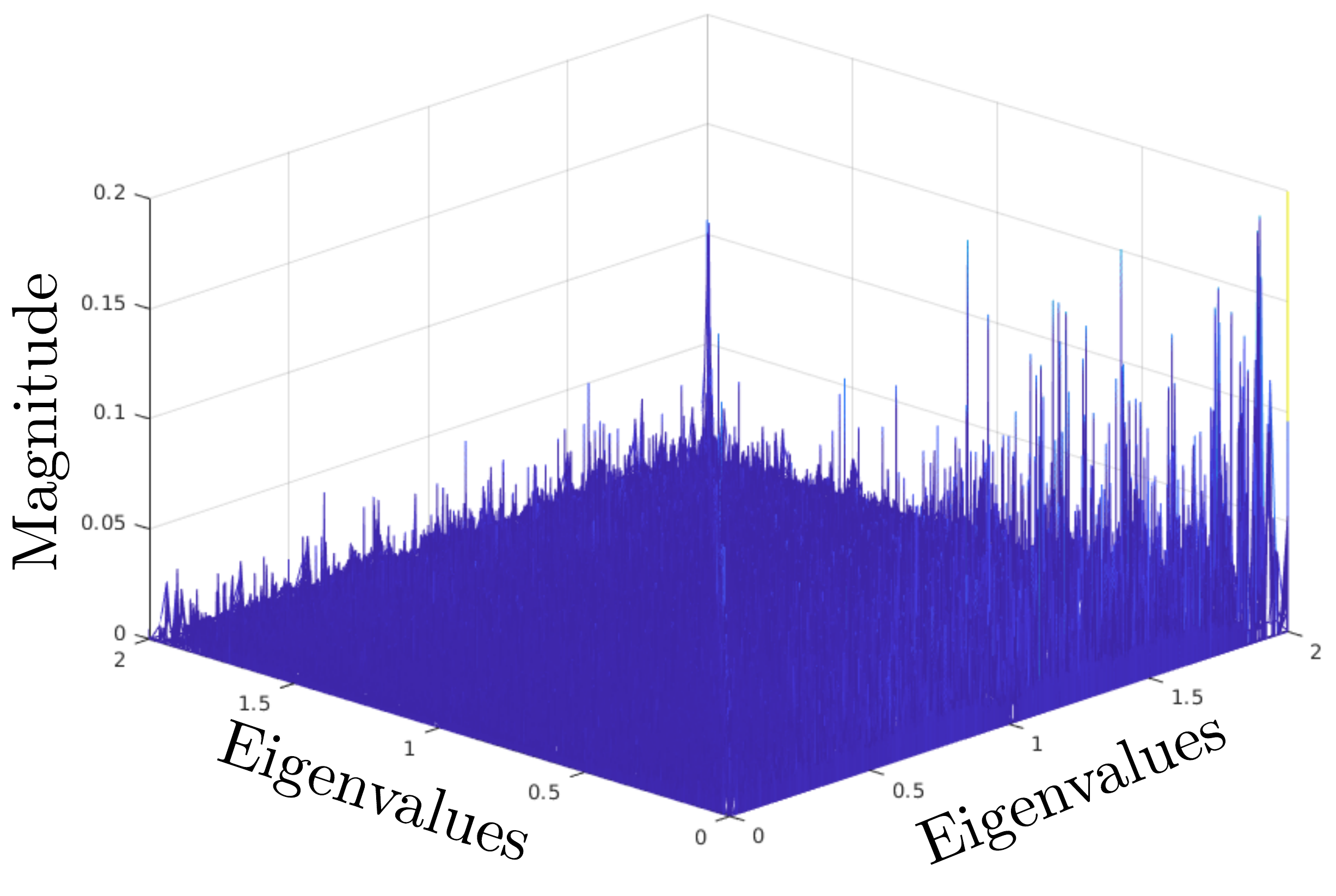}
  }
  
  \caption{Frequency profiles of randomly generated 250 GAT convolutions using Cora graph.}
\label{fig:gatfreqz}
\medskip
\end{figure*}

Graph attention networks (GATs) rely on trainable convolutions kernels~[11]. For this reason, frequency profiles cannot be directly computed similarly to GCN or ChebNet ones. Thus, instead of back-calculating the kernels, we perform simulations and evaluate the potential kernels of attention mechanism for given graphs. Hence, we show the frequency profiles of those simulated potential kernels. 

In ~[11], 8 different attention heads are used. Assuming that each attention head matrix is a convolution kernel, multi-attention systems can be seen as multi-kernel convolutions. The difference is that convolution kernels are not a priori defined but are functions of node feature vectors and trainable parameters $\mathbf{a}$ and $\mathbf{W}$; see \eqref{eq:gat2}. 
To show the potential output of GATs on the Cora graph (1433 features for each node), we produce 250 random pairs  of $\mathbf{W} \in R^{1433 \times 8}$ and $\mathbf{a} \in R^{16 \times 1}$, which correspond to the convolution kernels trained by GATs. The $\sigma$ function in \eqref{eq:gat2} is a $\verb|LeakyReLU|$ activation with a 0.2 negative slope as in ~[11].

The mean and standard deviation of the frequency profiles for these simulated GAT kernels are shown in \figurename~\ref{fig:gatfreqz}. As one can see, the mean standard frequency profile has a similar shape as those of GCN (\figurename~\ref{fig:gcnnfreq}). However, variations on the frequency profile induce more variations on output signal when compared to GCN.  

%Besides, t
The full frequency profile is not symmetric. According to  \figurename~\ref{fig:gatfreqz}, variations are mostly on the right side of the diagonal in the full frequency profile. This is %closely 
related to the fact that these convolution kernels are not symmetric. However, the variation on frequency profile might not be sufficient in problems that need some specific band-pass filters.

\subsection*{Discussion}

This section has shown that most influential graph convolutions~[10,11] operate as low-pass filters. Interestingly, while being restricted to low-pass filters, they still obtain state-of-the-art performance on particular node classification problems such as Cora and Citeseer~[12]. These results  on these particular problems are induced by the nature of the graphs to be processed. Indeed, citation network problems are inherently low-pass filtering problems, similarly to image segmentation problems, which are efficiently tackled by low-pass filtering. 

It is worth noting that, if we use enough convolution kernels, the frequency response of ChebNet kernels [8] covers nearly all frequency profiles. However, these frequency responses are not specific to special bands of frequency. It means that they can act as high-pass filters, but not as  Gabor-like special band-pass filters.

As a conclusion, we claim that graph convolutions presented in this section are problem specific and not problem agnostic. Experiments conducted in Section~\ref{sec:xp} provide empirical results to validate the theoretical analysis conducted in this section.

\begin{figure}[!t]
  \centering
  \includegraphics[width=.65\textwidth]{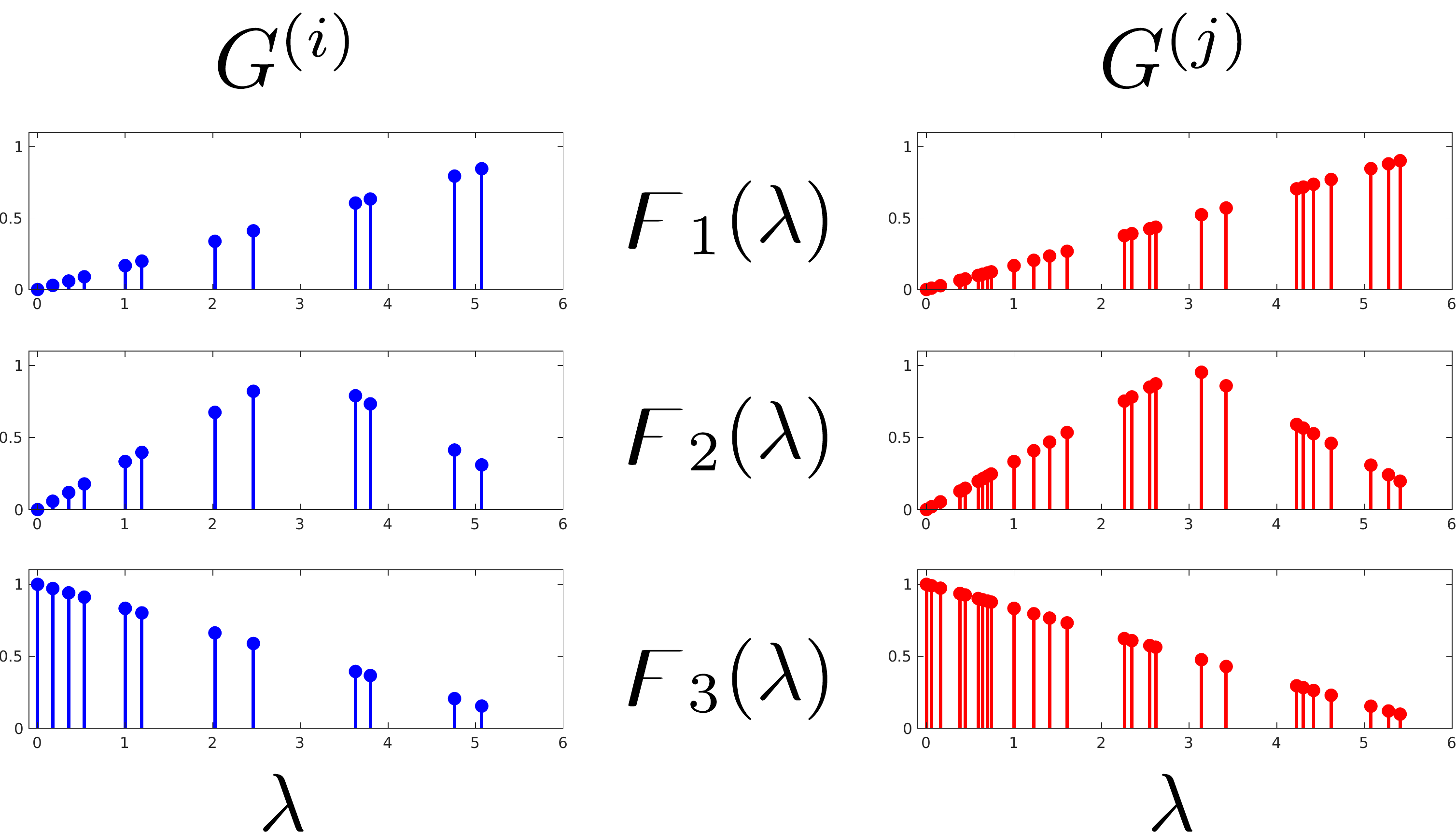}
  \caption{Three designed convolution kernel frequency profiles as a function of graph eigenvalues $(\blambda)$ of two sample graphs $G^{(i)}$ and $G^{(j)}$  by $\digamma_1(\blambda)=\frac{\blambda}{6},\digamma_2(\blambda)={\bf1} -\frac{| \blambda - 3|}{3}$ and $\digamma_3(\blambda)= {\bf1} -\frac{\blambda}{6}$. There are three shared coefficients. Each coefficient encodes the contribution of corresponding frequency profiles. First row refers mostly to high frequencies, middle row to middle frequencies and last row to low frequencies.}
  \label{fig:spec_design}
\end{figure}

\subsection{Depthwise Separable Graph Convolutions}
\label{section:depthwise}

Instead of designing the spatial convolution kernels $C^{(s)}$ of \eqref{eq:Eq3}  by  functions of graph adjacency and/or graph Laplacian, we propose in this section to use $S$ convolution kernels that have custom-designed standard frequency profiles. These designed frequency profiles are a function of eigenvalues, such as $\left[\digamma_1(\blambda), \dots, \digamma_S(\blambda)\right]$. In this proposal, the number of kernels and their frequency profiles %($\digamma_1(\blambda), \dots, \digamma_S(\blambda)$) 
are hyperparameters. % and should be defined determined initially.
%design $S$ spectral convolution kernels using vectors $\left[\digamma_1(\blambda),\dots,\digamma_S(\blambda)\right]$ of Eq.~\ref{eq:Eq9a} as functions of eigenvalues, i.e. $B_s = f([\lambda_0, \lambda_1, \dots, \lambda_{\max}])$ where the function $f$  and other hyper parameters of the function should be defined initially.
Then, we can back-calculate corresponding spatial convolution matrices using \eqref{eq:Eq9a} in Theorem~\ref{Th:th1}.

To obtain problem-agnostic graph convolutions, the sum of all designed convolutions’ frequency profiles has to cover most of the possible spectrum and each kernel's frequency profile must focus on some certain ranges of frequencies. % as much as possible.
As a didactic example, we show in \figurename~\ref{fig:spec_design} an example of desired spectral convolutions frequency profiles for $S=3$ and its application on two different graphs.

In order to figure out arbitrary relations of input-output pairs, multiple convolution kernels have to be efficiently designed. However, increasing the number $S$ of convolution kernels  increases the number of trainable parameters linearly. Hence,  the total number of multi-support ConvGNN  is given by $S\sum_{i = 0}^L f_i f_{i+1} $ where $L$ is the number of layers and $f_i$ is the feature length of the $i$-th layer.

To overcome this issue, we propose to use Depthwise Separable Graph Convolution Network (DSGCN). Depthwise Separable Convolution framework has already been used in computer vision problems to reduce
the model size and its complexity~[30,31]. To the best of our knowledge, depthwise separable graph convolution has never been proposed  in the literature.  

 Instead of filtering all input features for each output feature, DSGCN consists in  filtering each input feature once. Then, filtered signals are merged into the desired number of output features through 1$\times$1 convolutions with  different contribution coefficients. Detailed illustration of the proposed depthwise separable graph convolution process is presented in \figurename~\ref{fig:depthwise_layer}.

\begin{figure}[t]
\centering
 \includegraphics[width=.95\textwidth]{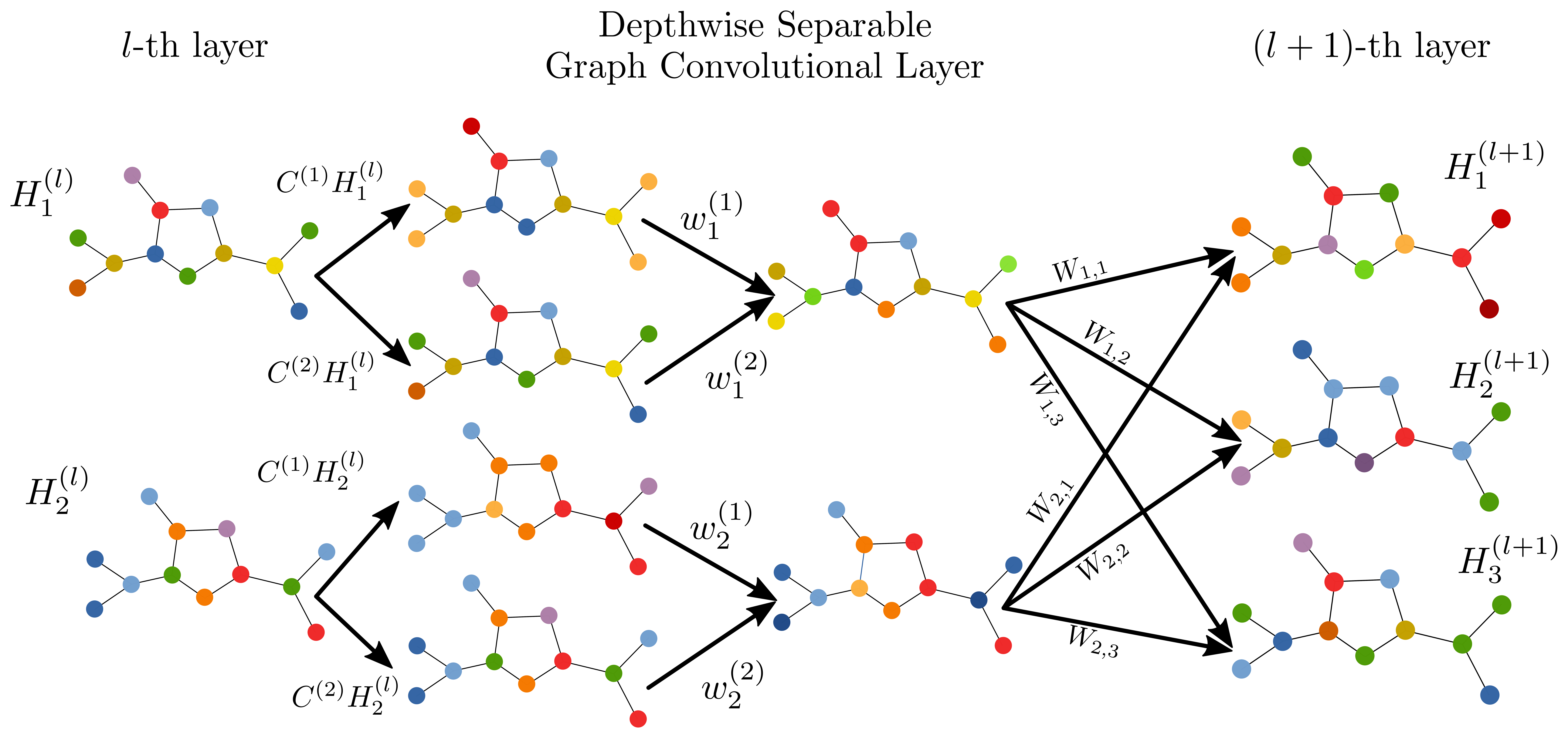}
 \caption{Detailed schematic of Depthwise Separable Graph Convolution Layer. Each node has a 2-length feature vector, indicated as $H_1^{(l)}$ and $H_2^{(l)}$ with values represented by colors. The following layer has a 3-length feature vector, denoted $H_1^{(l+1)}$, $H_2^{(l+1)}$ and $H_3^{(l+1)}$. Here, two convolution kernels are used, denoted by $C^{(1)}$ and $C^{(2)}$. Convoluted signals are multiplied by trainable weight $w$ and are summed to obtain interlayer signals.  To obtain the 3 next layer features, a weighted sum is computed using the other trainable parameter $W$.
  %This architecture has 10 trainable parameters except biases.
  }
 \label{fig:depthwise_layer}
 \medskip
\end{figure}

Mathematically, forward calculation of  each layer of DSGCN is defined by: 
\begin{equation}
  \label{eq:EqDepth}
  H^{(l+1)} = \sigma \left( \Big(\sum_{s=1}^S w^{(s,l)} \odot
  (C^{(s)}H^{(l)}) \Big)  W^{(l)}\right).
\end{equation}
In this expression, the notation $\odot$  denotes the element-wise multiplication operator. Note that there is only one trainable matrix $W$ in each layer. Other trainable variables $w^{(s,l)} \in \mathbb{R}^{1 \times f_l}$
encode feature contributions for each convolution kernel 
and layer. % between the row vector on the left and each row on right operand.
The number of trainable parameters for this case becomes $\sum_{i=0}^L S f_i + f_i f_{i+1}$. Previously, adding a new kernel  increases the number of parameters by $\sum_{i=0}^L f_i f_{i+1}$. Using separable convolutions, this number is only increased by
$\sum_{i=0}^L f_i$. This modification is particularly interesting when the number of features is high. On the other hand, the variability of the model also decreases. If the data has a smaller number of features, using this approach might not be optimal.

\section{Experimental evaluation}
\label{sec:xp}

In this section, we describe the experiments carried out to evaluate the proposed approach on both transductive and inductive problems. In the first case, we target a single graph node classification task while in the second case, both multi-graph node classification task and entire graph classification task are considered (see Section \ref{secGLP}). For all the experiments, we compare our algorithm to state-of-the-art approaches.

\subsection{Transductive Learning Problem}

\subsubsection{Datasets} 

Experiments on transductive problems were led on the three datasets summarized in \tablename~\ref{tabletrsum}. These datasets are well-known paper citation graphs. Each node corresponds to a paper. If one paper cites another one, there is an unlabeled and undirected edge between the corresponding nodes. Binary features on the nodes indicate the presence of specific keywords in the corresponding paper. The task is to attribute a class to each node (i.e., paper) of the graph using for training the graph itself and a very limited number of labeled nodes. Labeled data ratio is 5.1\%, 3.6\% and 0.3\% for Cora, Citeseer and PubMed respectively. We use predefined train, validation and test sets as defined in [12] and follow the test procedure of [10,11] for fair comparisons. 

\begin{table}[!t]
\renewcommand{\arraystretch}{1.3}
\caption{Summary of the transductive datasets used in our experiments. \newline Each dataset consists of one single graph}
\centering

\begin{tabular}{l c c c}
  & Cora & Citeseer & PubMed \\
\toprule
\# Nodes  &2708  & 3327   &19717 \\
\# Edges &5429 &4732 &44338 \\
\# Features &1433 &3703 &500 \\
\# Classes &7 &6 &3 \\
\# Training Nodes &140 &120 &60 \\
\# Validation Nodes &500 &500 &500 \\
\# Test Nodes &1000 &1000 &1000 \\
\bottomrule
\end{tabular}
\label{tabletrsum}
\end{table}

\subsubsection{Models} 

%In order t
To evaluate the performance of %spectrally designed convolutions
convolutions designed in the spectral domain 
independently from the architecture design, a single hidden layer is used for all models, as in [10] for GCN. This choice, even sub-optimal, enables %to have a deeper 
a deep understanding of the %importance of the
convolution kernels. For these evaluations, a set of convolution kernels is experimented:
\begin{itemize}
    \item A low-pass filter defined by $\digamma_{1}(\blambda)=({\bf1}-\blambda/\lambda_{\max})^\eta$ where $\eta$ impacts the cut-off frequency 
    \item A high-pass filter defined by $\digamma_{2}(\blambda)=\blambda/\lambda_{\max}$ 
    \item Three band-pass filters defined by: \begin{itemize}
        \item $\digamma_{3}(\blambda)=\exp(-\gamma(0.25\lambda_{\max}-\blambda)^2)$
        \item $\digamma_{4}(\blambda)=\exp(-\gamma(0.5\lambda_{\max}-\blambda)^2)$
        \item $\digamma_{5}(\blambda)=\exp(-\gamma(0.75\lambda_{\max}-\blambda)^2)$
    \end{itemize}
    \item An all-pass filter defined by $\digamma_{6}(\blambda)={\bf1}$
\end{itemize}

%For each dataset, w
We firstly consider a model composed of only $\digamma_{1}$. This choice comes from the fact that state-of-the-art GNNs are sort of low-pass filters (see Section \ref{secSAGC}) and perform well on the datasets of \tablename~\ref{tabletrsum}. Hence, it is interesting to evaluate our framework with $\digamma_{1}$. For the experiments, the value of $\eta$ are tuned for each dataset, using the validation loss value and accuracy, yielding $\eta=5$ for Cora and Citeseer, and $\eta=3$ for PubMed. Details concerning this tuning can be found in \tablename~A1 in Appendix~D. Since there is only one convolution kernel, depthwise separable convolutions are not necessary for this model. Therefore, this model can be seen as similar to those from  [8,10] but using a different convolution kernel. This approach is denoted as {LowPassConv} in the results section (Section \ref{sec:results}).

Beyond this low-pass model, we also evaluate different combinations of the $\digamma_i(\blambda)$ through the depthwise separable schema defined in Section~\ref{section:depthwise}. For experiments involving  $\{\digamma_{3}(\blambda), \digamma_{4}(\blambda), \digamma_{5}(\blambda)\}$, the bandwidth parameter $\gamma$ was tuned using train and validation sets. \tablename~\ref{tablearch2} details the best models found on the validation set. As an example, for Cora dataset, 4 kernels are used by a DSGCN with 160 neurons: $\digamma_1(\blambda)$, $\digamma_3(\blambda)$, $\digamma_4(\blambda)$, $\digamma_5(\blambda)$. As an illustration, \figurename~\ref{ourcorafreq} provides the standard frequency profiles of this designed convolution on Cora dataset. The models of \tablename~\ref{tablearch2} are denoted as {DSGCN} in the following. 

\begin{table}%[!]
\renewcommand{\arraystretch}{1.3}
\caption{Used kernels frequency profiles and architecture of models for each transductive dataset. DSG refers to Depthwise Separable Graph convolution layer, G to Graph convolution layer, D to Dense layer}
\centering

\begin{tabular}{l c  }
\hline
Dataset & Architecture \\
\hline 
&  $\digamma_{1}(\blambda)=({\bf1}-\blambda/\lambda_{\max})^5$\\ 
& $\digamma_{3}(\blambda)=\exp(-0.25(0.25\lambda_{\max}-\blambda)^2)$ \\
Cora & $\digamma_{4}(\blambda)=\exp(-0.25(0.5\lambda_{\max}-\blambda)^2)$\\ 
& $\digamma_{5}(\blambda)=\exp(-0.25(0.75\lambda_{\max}-\blambda)^2)$  \\
& DSG160-DSG7 \\
\hline
Citeseer& $\digamma_{1}(\blambda)=({\bf1}-\blambda/\lambda_{\max})^5$,~$\digamma_{6}(\blambda)={\bf1}$ \\
 &  DSG160-DSG6 \\
\hline
Pubmed & $\digamma_{1}(\blambda)=({\bf1}-\blambda/\lambda_{\max})^3$,~$\digamma_{2}(\blambda)=\blambda/\lambda_{\max}$ \\
 & DSG16-DSG3 \\
 \hline
\end{tabular}
\label{tablearch2}
\end{table}

\begin{figure}[!t]
\centering
\includegraphics[width=.6\textwidth]{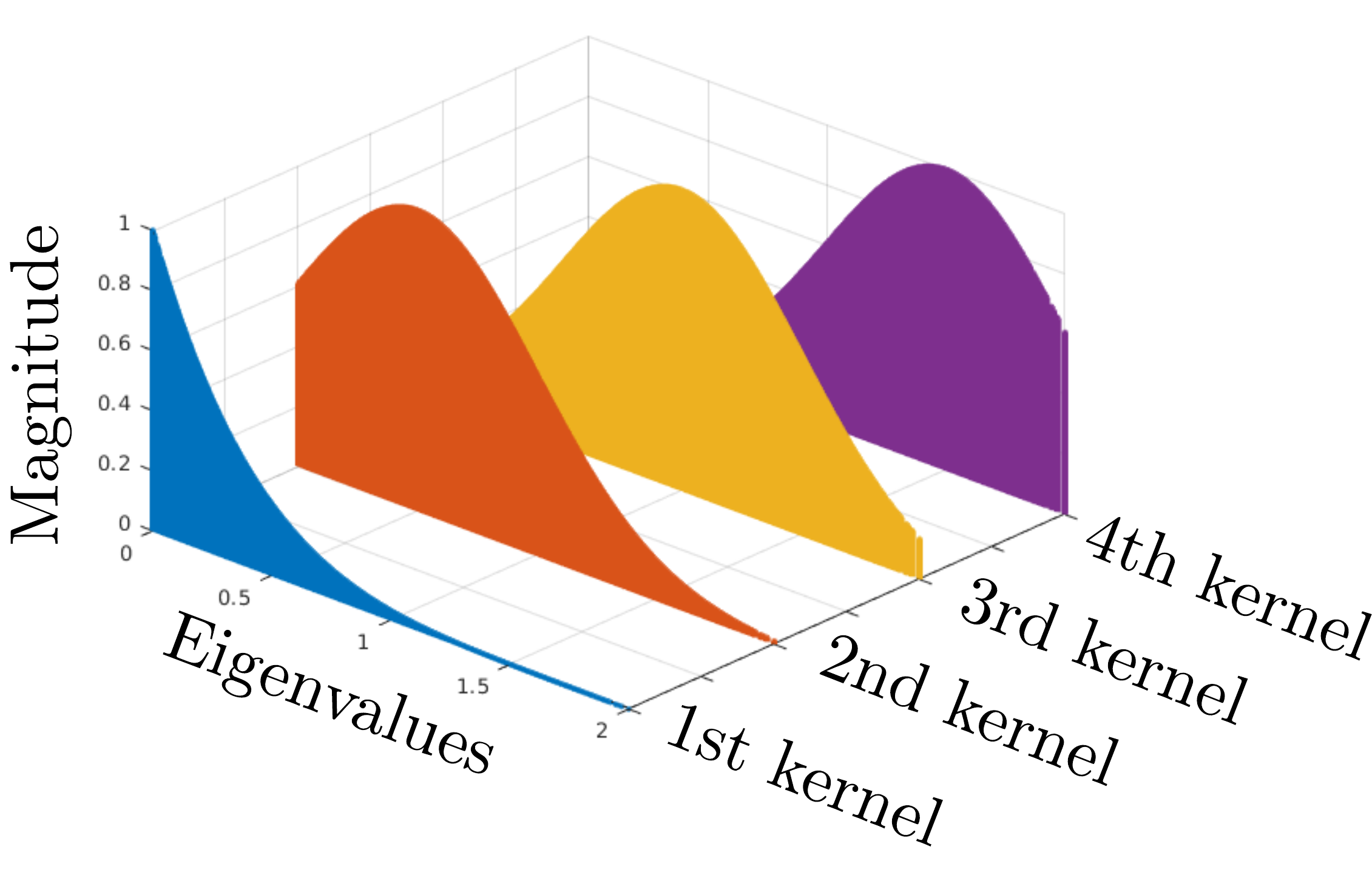}
\caption{Designed convolution's frequency profiles for Cora dataset.}
\label{ourcorafreq}
\end{figure}

The training hyperparameters were tuned over a grid search using a cross-validation procedure% on train and validation sets
. Hyperparameter values can be found in \tablename~A2 of Appendix D. Other protocol details are also given in this appendix.

\subsubsection{Results}
\label{sec:results}

Obtained results on transductive learning are given in \tablename~\ref{tab:trans-first}. We compare the performance of the proposed {LowPassConv} and {DSGCN} to state-of-the-art methods.

%In \tablename~\ref{tab:trans-first}, w
We first can see that our low-pass convolution kernel (LowPassConv)  obtains comparative performance with  existing methods. This result confirms our theoretical analysis which states that GCN and GAT mostly correspond to low-pass filters (Section~\ref{secSAGC}). Second, DSGCN outperforms state-of-the-art methods thanks to the flexibility  provided by the different filters. 
%These results show the potential of not being restricted to low-pass filters. 
It is worth noting that the good results obtained by low-pass approaches show that these three classification tasks are mainly low-pass specific problems. Differences in accuracies may be significantly bigger for band-pass or high-pass based problems.

\begin{table}[!t]
\renewcommand{\arraystretch}{1.3}
\caption{Comparison of methods on the transductive learning problems  using publicly defined train, validation and test sets. Accuracies on test set are reported with their standard deviations under 20 random runs.}% in our propositions.}

\centering
\begin{tabular}{l c c c }

 Method &  Cora & Citeseer & Pubmed \\
\toprule

MLP &0.551     &0.465   & 0.714  \\
Planetoid [12]
&0.757    &0.647    & 0.744  \\
MoNet [23]  &0.817 $\pm$ 0.005   &-  & 0.788 $\pm$ 0.003  \\
ChebNet [8]  &0.812    &0.698   & 0.744  \\
CayleyNet [9] &0.819 $\pm$ 0.007   &-   & -  \\
DPGCNN [26] &0.833 $\pm$ 0.005 & 0.726 $\pm$ 0.008  &- \\
%Graph U-Net \cite{gao2019graph}  &0.844 $\pm$ 0.006 &-  &0.732 $\pm$ 0.005 &-  & 0.796 $\pm$ 0.002   &-\\ 
GCN [10]  &0.819 $\pm$ 0.005    &0.707 $\pm$ 0.004    & 0.789 $\pm$    0.003    \\
GAT [11]  &0.830 $\pm$ 0.007   &0.725 $\pm$ 0.007  & 0.790 $\pm$ 0.007   \\

\midrule
\textbf{LowPassConv} &0.827 $\pm$   0.006	& 0.717 $\pm$    0.005 &0.794  $\pm$ 0.005 \\

\textbf{DSGCN} &  \textbf{0.842  $\pm$  0.005}	& \textbf{0.733 $\pm$   0.008}	& \textbf{0.819  $\pm$ 0.003}	 \\
\bottomrule

\end{tabular}
\label{tab:trans-first}

\end{table}

\begin{table}[!t]
\renewcommand{\arraystretch}{1.3}
\caption{Summary of inductive learning datasets used in this paper.}
\centering

\begin{tabular}{l c c c}
\toprule
  & PPI & PROTEINS & ENZYMES\\
\midrule
\ Type & Node Class. & Graph Class. & Graph Class. \\
\# Graph  &24  &1113  &600  \\
\# Avg.Nodes  &2360.8  &39.06  &32.63 \\
\# Avg.Edges &33584.4 &72.82 & 62.14 \\
\# Features &50 &3 label & 3 label + 18 cont. \\
\# Classes &2 (121 criterias) &2   &6  \\
\# Training &20 graphs & 9-fold & 9-fold  \\
\# Validation &2 graphs & 1-fold   & 1-fold  \\
\# Test &2 graphs &  None & None  \\
\bottomrule
\end{tabular}
\label{tableindsum}
\end{table}

\subsection{Inductive Learning Problem}
Inductive Learning problems are common in chemoinformatics and bioinformatics. In an inductive setting, a given instance is represented by a single graph. Thus, models are trained and tested on different graph sets.

In the graph neural networks literature, there is a controversy concerning the transferability of spectral designed convolutions from learning graphs to unseen graphs. Some authors consider that convolutions cannot be transferred [23], while very recent theoretical  [32] and empirical [33] works show the contrary. 
In this subsection, we target to bring an answer to this controversy by experimenting our proposal on inductive learning problems. 

\subsubsection{Datasets} 

Inductive experiments are led on 3 datasets (see \tablename~\ref{tableindsum} for a summary): a multi-graph node classification dataset called Protein-to-Protein Interaction (PPI) [34] and on two graph classification datasets called PROTEINS and ENZYMES [35]. The protocols used for the evaluations are those defined in [11] for PPI and  [36,37,38,39] for PROTEINS and ENZYMES datasets.

The PPI dataset is a multi-label node classification problem on multi-graphs. Each node has to be classified either True or False for 121 different criteria. All the nodes are described by a 50-length continuous feature vector. The PPI dataset includes 24 graphs, with a train/validation/test standard splitting. 

The PROTEINS and ENZYMES datasets are graph classification datasets. There are 2 classes in PROTEINS and 6 classes in ENZYMES. In PROTEINS dataset, there are three different types of nodes and one continuous feature. But we do not use this continuous feature on nodes. In ENZYMES dataset, there are 18 continuous node features and three different kinds of node types. In the literature, some methods use all provided continuous node features while others use only node label. This is why ENZYMES results are given using either all features (denoted by ENZYMES-allfeat) or only node labels (denoted by ENZYMES-label). 

Since there is no standard train, validation and test sets split for PROTEINS and ENZYMES, the results are given using a 10-fold cross-validation (CV) strategy under a fixed predefined epoch number. The CV only uses training and validation set. Specifically, after obtaining 10 validation curves corresponding to 10 folds, we first take average of validation curves across the 10 folds and then select the single epoch that achieved the maximum averaged validation accuracy. This procedure is repeated 20 times with random seeds and random division of dataset. Mean accuracy and standard deviation are reported. This is the same protocol than [36,37,38,39].

\subsubsection{Models} 

For PPI, 7 depthwise graph convolution layers compose the model. Each layer has 800 neurons, except the output layer which has 121 neurons, each one classifying the node either True or False. All layers use a ReLU activation except the output layer, which is linear. No dropout or regularization of the binary cross-entropy loss function is used. All graph convolutions use three spectral designed convolutions: a low-pass convolution given by $\digamma_{1}(\blambda)=\exp(-\blambda/10)$, a high-pass one given by $\digamma_{2}(\blambda)=\blambda/\lambda_{\max}$ and an all-pass filter given by $\digamma_{3}(\blambda)={\bf1}$.

For graph classification problems (PROTEINS %, ENZYMES-label 
and ENZYMES%-allfeat
), depthwise graph convolution layers are not needed since these datasets have a reduced number of features. Thus, it is tractable to use all multi-support graph convolution layers instead of the depthwise schema. In these cases, our models firstly consist of a series of graph convolution layers. Then, a global pooling (i.e., graph readout) is applied in order to aggregate extracted features at graph level. For this pooling, we use a concatenation of mean and max global pooling operator, as used in [37]. Finally, a dense layer (except for ENZYMES-label) is applied, before the output layer as in~[39]. 

All details about the architecture and designed convolutions can be found in \tablename~\ref{tablearch3}. The hyperparameters used in best models can be found on \tablename~\ref{tablehyper} in Appendix~\ref{section:appdetails}.

\begin{table}%[b]
\renewcommand{\arraystretch}{1.3}
\caption{Kernels frequency profiles and model architecture for each inductive  dataset. meanmax refers to global mean and max pooling layer. \newline Same legend as \tablename~\ref{tablearch2}.%DSG refers to Depthwise Separable Graph convolution layer, G refers to Graph Convolution layer, D refers Dense layer
}
\centering

\begin{tabular}{l c  }
\hline
Dataset & Architecture\\
\hline 
&  $\digamma_{1}(\blambda)=\exp(-\blambda/10)$ \\
PPI & ~$\digamma_{2}(\blambda)=\blambda/\lambda_{\max}$,~$\digamma_{3}(\blambda)={\bf1}$ \\ 
   & DSG800-DSG800-DSG800-DSG800-\\
   & DSG800-DSG800-DSG121\\
\hline

PROTEINS & $\digamma_{1}(\blambda)={\bf1}-\blambda/\lambda_{\max}$,~$\digamma_{2}(\blambda)=\blambda/\lambda_{\max}$ \\
   & G200-G200-meanmax-D100-D2\\
\hline

& $\digamma_{1}(\blambda)={\bf1}$,~$\digamma_{2}(\blambda)=\blambda_{s}-{\bf1}$\\
ENZYMES-label  & $\digamma_{3}(\blambda)=2\blambda_{s}^2-4\blambda_{s}+{\bf1}$,~$\blambda_{s}=2\blambda/\lambda_{\max}$\\ 
& G200-G200-G200-G200-meanmax-D6\\
\hline

& $\digamma_{1}(\blambda)={\bf1}$,~$\digamma_{2}(\blambda)=\exp(-\blambda^2)$ \\
ENZYMES-allfeat & $\digamma_{3}(\blambda)=\exp(-(\blambda-0.5\lambda_{\max})^2)$\\
& $\digamma_{4}(\blambda)=\exp(-(\blambda-\lambda_{\max})^2) $ \\
& G200-G200-meanmax-D100-D6\\
\hline

\end{tabular}
\label{tablearch3}
\medskip
\end{table}

\subsubsection{Results} 

\tablename~\ref{tableindres} compares the results obtained by the models described above and state-of-the-art methods. A comparison with the same models but without graph information, a Multi-Layer Perceptron (MLP) that corresponds to $C^{(1)}=I$ is also provided to discuss if structural data include information or not. To the best of our knowledge, such an analysis is not provided in the literature. Finally, results obtained by the same architecture with GCN kernel is also provided.

\begin{table}%[!]
\renewcommand{\arraystretch}{1.3}
\caption{Comparison of methods on inductive learning problems using publicly defined data split for PPI dataset and 10-fold CV for PROTEINS and ENZYMES datasets. PPI results are the test set results reported by micro-F1 metric percentage. Others are CV results reported by accuracy percentage. Results denoted by $^{*}$ were reproduced from original source codes but denoted feature set.}
\centering
%\footnotesize

\begin{tabular}{@{}l @{~}c c c c @{}}
\toprule
Method &  PPI & PROTEINS & \multicolumn{2}{c}{ENZYMES} \\
 & All Features & Node Label & Node Label & All Features \\
\midrule

GraphSAGE [22]  &76.8   & - &-&-
 \\

GAT [11]  
&97.3  $\pm$ 0.20  & - &- &-
 \\ 
GaAN [40]  
&98.7  $\pm$ 0.20  & - &- &-
 \\ 

Hierarchical Pooling
[37]
&   -
& 75.46 & 64.17 & - \\ 

Diffpool [36]
&   - 
& 76.30  & 62.50 & 66.66$^{*}$ \\ 

ChebNet %\cite{defferrard16:_convol_neural_networ_graph_fast} in
[33]
&   -
&75.50 $\pm$ 0.40 &58.00 $\pm$ 1.40 &-  \\

Multigraph [33]
&  - 
&76.50 $\pm$ 0.40  &61.70 $\pm$ 1.30 &68.00 $\pm$ 0.83 \\

GIN [39] & - & 76.20 $\pm$ 0.86 & - & - \\
GFN [38]
&   -
& 76.56 $\pm$ 0.30$^{*}$  & 60.23 $\pm$ 0.92$^{*}$  & 70.17 $\pm$ 0.86  \\

\midrule

MLP ($C^{(1)}=I$) 
& 46.2 $\pm$ 0.56  
& 74.03 $\pm$ 0.92
& 27.83 $\pm$ 2.51 
& 76.11 $\pm$ 0.87   
\\

%$C^{(1)} = \widetilde{D}^{-\frac12}\widetilde{A} \widetilde{D}^{-\frac12}$
GCN \eqref{eq:Eq14}    
& 59.2 $\pm$ 0.52  
& 75.12 $\pm$ 0.82
& 51.33 $\pm$ 1.23
& 75.16 $\pm$ 0.65
\\

\textbf{DSGCN} 
&\textbf{99.09 $\pm$ 0.03}   
&\textbf{77.28 $\pm$ 0.38}
&\textbf{65.13 $\pm$ 0.65}
&\textbf{78.39 $\pm$ 0.63} \\ 
\bottomrule
\end{tabular}
\label{tableindres}
\medskip
\end{table}

As one can see in \tablename~\ref{tableindres}, the proposed method obtains competitive results on inductive datasets. For PPI, DSGCN clearly outperforms state-of-the-art methods with the same protocol, reaching a micro-F1 percentage of 99.09 and an accuracy of 99.45\%. For this dataset, MLP accuracy is low since the percentage of micro-F1 is 46.2 (random classifier's micro-F1 being 39.6\%). This means that the problem includes significant structural information. Using the GCN kernel, which operates as low-pass convolution (see Section \ref{subsecGCN}), the accuracy increases to 0.592, but again not comparable with state-of-the-art accuracy.

For the PROTEINS dataset, one can see that MLP ($C^{(1)}=I$) reaches an accuracy that is quite comparable with state-of-the-art GNN methods. Hence, MLP reaches a 74.03\% validation accuracy while the proposed DSGCN reaches 77.28\%, which is the best performance among GNNs. This means that PROTEINS problem includes very few structural information to be exploited by GNNs.

ENZYMES dataset results are very interesting in order to understand the importance of continuous features and their processing through different convolutions. As one can see in \tablename~\ref{tableindres}, there are important differences of performance between the results on ENZYMES-label and ENZYMES-allfeat. When node labels are used alone, without features, MLP accuracy is very poor and nearly acts as a random classifier. When using all features, MLP outperforms GCN and even some state-of-the-art methods. A first explanation is that methods are generally optimized for just node label but not for continuous features. Another one is that the continuous features already include information related to the graph structure since they are experimentally measured. Hence, their values are characteristic of the node when it is included in the given graph. Since GCN is just a low-pass filter, it removes some important information on higher frequency and decreases the accuracy. Thanks to the multiple convolutions proposed in this paper, our GNN DSGCN clearly outperforms other methods on the ENZYMES dataset.

\section{Conclusion}

The success of convolutions in neural network strongly depends on the capability of defined convolution kernels on producing outputs as different as possible. While this has been widely investigated for CNNs, there has not been any study for ConvGNNs with graph convolution, to the best of our knowledge. This paper proposed to fill this gap, by examining the graph convolutions as custom frequency profiles and taking  
advantage of using optimized multi-frequency profile convolutions. By this way, we significantly increased the performance on reference datasets. 

Nevertheless, the proposed approach has some drawbacks. First, it needs eigenvalues and eigenvectors of the graph Laplacian. If the graph has more than 20k nodes, computing these values is not tractable. Second, we did not propose yet any automatic procedure to select the best frequency profile of convolution. Hence, the proposed approach needs expertise to find the appropriate graph kernels. Third, although our theoretic complexity is the same than GCN or ChebNet, in practice our convolutions are more dense than GCN, which makes it slower in practice since it cannot take advantage of sparse matrix multiplications. Last, if edge type can be handled by designing convolution for each type, the proposed method does not handle continuous edge features and directed edges. 

Our future work will target the automatic design of graph convolutions in spectral domain. It may be done by unsupervised manner as preprocessing step. Another future work will be on handling given continuous edge features and directed edge in our framework. Also, we have a plan to design convolution frequencies not by function of eigenvalues but through linear combination of Chebyshev kernels in order to skip the necessity of eigenvalue calculations. 

\section*{Acknowledgment}

This work was partially supported by the ANR grant APi (ANR-18-CE23-0014), the Normandy Region project AGAC and the PAUSE Program.
\bigskip

\section*{References}

\medskip

\small

[1] A. Krizhevsky, I. Sutskever, and G. E. Hinton, “Imagenet classification with deep convolutional neural networks,” in NIPS, 2012.

[2] A.  Graves,  A.-r.  Mohamed,  and  G.  Hinton,  “Speech  recognition with deep recurrent neural networks,” in2013 IEEE international conference on acoustics, speech and signal processing.    IEEE, 2013, pp.6645–6649.

[3] F. Scarselli, M. Gori, A. C. Tsoi, M. Hagenbuchner, and G. Monfardini, “The graph neural network model,”IEEE Transactions on Neural Networks, vol. 20, no. 1, pp. 61–80, December 2009.

[4] J. Gilmer, S. S. Schoenholz, P. F. Riley, O. Vinyal, and G. E. Dahl,“Neural message passing from quantum chemistry,” in Proceedings of the International Conference on Machine Learning, 2017.

[5] M.  M.  Bronstein,  J.  Bruna,  Y.  LeCun,  A.  Szlam,  and  P.  Vandergheynst,  “Geometric  deep  learning:  Going  beyond  euclidean data,”IEEE  Signal  Processing  Magazine,  vol.  34,  no.  4,  pp.  18–42,July 2017.

[6] Z.  Wu,  S.  Pan,  F.  Chen,  G.  Long,  C.  Zhang,  and  P.  S.  Yu,  “A comprehensive survey on graph neural networks,” arXiv pr eprintarXiv:1901.00596, 2019.

[7] J.  Bruna,  W.  Zaremba,  A.  Szlam,  and  Y.  LeCun,  “Spectral  networks and locally connected networks on graphs,” arXiv preprint arXiv:1312.6203, 2013.

[8] M. Defferrard, X. Bresson, and P. Vandergheynst, “Convolutional neural networks on graphs with fast localized spectral filtering,”in Advances  in  Neural  Information  Processing  Systems,  2016,  pp.3844–3852.

[9] R. Levie, F. Monti, X. Bresson, and M. M. Bronstein, “Cayleynets:Graph convolutional neural networks with complex rational spectral  filters,”IEEE  Transactions  on  Signal  Processing,  vol.  67,  no.  1,pp. 97–109, Jan 2019.

[10]  T.  N.  Kipf  and  M.  Welling,  “Semi-supervised  classification  with graph   convolutional   networks,”   in International   Conference   on Learning Representations (ICLR), 2017.

[11]  P.  Velickovic,  G.  Cucurull,  A.  Casanova,  A.  Romero,  P.  Lio,  and Y. Bengio, “Graph attention networks,” in International Conference on Learning Representations (ICLR), 2018.

[12]  Z.  Yang,  W.  W.  Cohen,  and  R.  Salakhutdinov,  “Revisiting  semi-supervised learning with graph embeddings,” in Proceedings of the33rd International Conference on International Conference on Machine Learning, ICML’16, 2016.

[13]  Z.  Wu,  S.  Pan,  F.  Chen,  G.  Long,  C.  Zhang,  and  P.  S.  Yu,  “A comprehensive survey on graph neural networks,”arXiv preprint arXiv:1901.00596, 2019.

[14]  F. Chung,Spectral graph theory.American Mathematical Society,1997.

[15]  D.  I.  Shuman,  S.  K.  Narang,  P.  Frossard,  A.  Ortega,  and  P.  Vandergheynst, “The emerging field of signal processing on graphs:Extending high-dimensional data analysis to networks and other irregular domains,”IEEE signal processing magazine, vol. 30, no. 3,pp. 83–98, 2013.

[16]  M. Henaff, J. Bruna, and Y. LeCun, “Deep convolutional networks on graph-structured data,”arXiv preprint arXiv:1506.05163, 2015.

[17]  S.  Kearnes,  K.  McCloskey,  M.  Berndl,  V.  Pande,  and  P.  Riley,“Molecular  graph  convolutions:  moving  beyond  fingerprints,”Journal  of  computer-aided  molecular  design,  vol.  30,  no.  8,  pp.  595–608, 2016.

[18]  D.  K.  Duvenaud,  D.  Maclaurin,  J.  Iparraguirre,  R.  Bombarell,T. Hirzel, A. Aspuru-Guzik, and R. P. Adams, “Convolutional networks on graphs for learning molecular fingerprints,” in Advances in Neural Information Processing Systems, 2015, pp. 2224–2232.

[19]  M. Niepert, M. Ahmed, and K. Kutzkov, “Learning convolutional neural  networks  for  graphs,”  in Proceedings  of  the  International Conference on Machine Learning, 2016, pp. 2014–2023.

[20]  Y.  Hechtlinger,  P.  Chakravarti,  and  J.  Qui,  “A  generalization  of convolutional  neural networks  to  graph-structured  data,”  arXiv preprint arXiv:1704.08165, 2017.

[21]  J.  Atwood  and  D.  Towsley,  “Diffusion-convolutional  neural  networks,” in Advances in Neural Information Processing Systems, 2016,pp. 1993–2001.

[22]  W. Hamilton, Z. Ying, and J. Leskovec, “Inductive representation learning  on  large  graphs,”  in Advances  in  Neural  Information  Processing Systems, 2017, pp. 1024–1034.

[23]  F. Monti, D. Boscaini, J. Masci, E. Rodola, J. Svoboda, and M. M.Bronstein,  “Geometric  deep  learning  on  graphs  and  manifolds using  mixture  model  cnns,”  in Proceedings  of  the  IEEE  Conference on Computer Vision and Pattern Recognition, 2017, pp. 5115–5124.

[24]  M.  Fey,  J.  Eric  Lenssen,  F.  Weichert,  and  H.  Muller,  “Splinecnn:Fast  geometric  deep  learning  with  continuous  b-spline  kernels,”in Proceedings of the IEEE Conference on Computer Vision and Pattern Recognition, 2018, pp. 869–877.

[25]  B.  Xu,  N.  Wang,  T.  Chen,  and  M.  Li,  “Empirical  evaluation of  rectified  activations  in  convolutional  network,”arXiv  preprint arXiv:1505.00853, 2015.

[26]  F. Monti, O. Shchur, A. Bojchevski, O. Litany, S. Gunnemann, andM.  M.  Bronstein,  “Dual-primal  graph  convolutional  networks,”2018.

[27]  D. K. Hammond, P. Vandergheynst, and R. Gribonval, “Waveletson  graphs  via  spectral  graph  theory,”Applied  and  Computational Harmonic Analysis, vol. 30, no. 2, pp. 129–150, 2011.

[28]  M.  Zhang,  Z.  Cui,  M.  Neumann,  and  Y.  Chen,  “An  end-to-end deep  learning  architecture  for  graph  classification,”  in Thirty-Second AAAI Conference on Artificial Intelligence, 2018.

[29]  S.   Abu-El-Haija,   B.   Perozzi,   A.   Kapoor,   H.   Harutyunyan,N. Alipourfard, K. Lerman, G. V. Steeg, and A. Galstyan, “Mixhop:Higher-order graph convolution architectures via sparsified neighborhood  mixing,”  in International  Conference  on  Machine  Learning(ICML), 2019.

[30]  F.  Chollet,  “Xception:  Deep  learning  with  depthwise  separable convolutions,”  in Proceedings  of  the  IEEE  conference  on  computer vision and pattern recognition, 2017, pp. 1251–1258.

[31]  M. Sandler, A. G. Howard, M. Zhu, A. Zhmoginov, and L. Chen,“Mobilenetv2: Inverted residuals and linear bottlenecks,” in2018IEEE Conference on Computer Vision and Pattern Recognition, CVPR2018, Salt Lake City, UT, USA, June 18-22, 2018.IEEE Computer Society, 2018, pp. 4510–4520.

[32]  R.  Levie,  E.  Isufi,  and  G.  Kutyniok,  “On  the  transferability  of spectral graph filters,”arXiv preprint arXiv:1901.10524, 2019.

[33]  B.  Knyazev,  X.  Lin,  M.  R.  Amer,  and  G.  W.  Taylor,  “Spectral multigraph networks for discovering and fusing relationships in molecules,”arXiv preprint arXiv:1811.09595, 2018.

[34]  M.  Zitnik  and  J.  Leskovec,  “Predicting  multi cellular  function through   multi-layer   tissue   networks,”Bioinformatics,   vol.   33,no. 14, pp. i190–i198, 2017.

[35]  K.  Kersting,  N.  M.  Kriege,  C.  Morris,  P.  Mutzel,  and  M.  Neumann,  “Benchmark  data  sets  for  graph  kernels,”  2016,  http://graphkernels.cs.tu-dortmund.de.

[36]  Z. Ying, J. You, C. Morris, X. Ren, W. Hamilton, and J. Leskovec,“Hierarchical  graph  representation  learning  with  differentiable pooling,”  in Advances  in  Neural  Information  Processing  Systems,2018, pp. 4800–4810.

[37]  C.   Cangea,   P.   Velickovic,   N.   Jovanovic,   T.   Kipf,   and   P.   Lio,“Towards  sparse  hierarchical  graph  classifiers,”arXiv  preprintarXiv:1811.01287, 2018.

[38]  Y. S. Ting Chen, Song Bian, “Dissecting graph neural networks on graph classification,”CoRR, vol. abs/1905.04579, 2019.

[39]  K.  Xu,  W.  Hu,  J.  Leskovec,  and  S.  Jegelka,  “How  powerful  are graph  neural  networks?”  in International  Conference  on  Learning Representations, 2019.

[40]  J.  Zhang,  X.  Shi,  J.  Xie,  H.  Ma,  I.  King,  and  D.-Y.  Yeung,  “Gaan:Gated attention networks for learning on large and spatiotemporal graphs,” in Conference on Uncertainty in Artificial Intelligence, UAI,2018

\clearpage

\begin{@twocolumnfalse}
 \Large
 %\bigskip

 \begin{center}
 \noindent APPENDICES 
 %\bigskip\bigskip
 \end{center}
 \end{@twocolumnfalse}

%\twocolumn

%\appendices

%Added by Paul
\setcounter{section}{0}
\def\thesection{\Alph{section}}
\setcounter{table}{0}
\renewcommand{\thetable}{A\arabic{table}}

%\counterwithin{theorem}{section}

\section{Theoretical Analysis of Chebyshev Kernels Frequency Profile}
\label{section:chebdetails}

In this appendix, we provide the expressions of the full and standard frequency profiles of the Chebyshev convolution kernels.

\begin{theorem}
  \label{Th:thch1}
The frequency profile of the first Chebyshev convolution kernel for any undirected arbitrary graph defined by $C^{(1)}=I$ can be defined by
\begin{equation}
  \label{eq:Eq1_ap1}
  \digamma_{1}(\blambda)={\bf1},
\end{equation}
where {\bf1} denotes the vector of ones of appropriate size.
\end{theorem}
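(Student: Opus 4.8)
The plan is to invoke Corollary~\ref{cor:frequency_profile} directly, since it already supplies a closed-form expression for the standard frequency profile of an arbitrary convolution kernel in terms of the eigenvector matrix $U$. The corollary states that $\digamma_s(\blambda) = diag^{-1}(U^\top C^{(s)} U)$, so the entire argument reduces to evaluating this expression for the specific choice $C^{(1)} = I$.

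First I would substitute $C^{(1)} = I$ into the formula, which gives $\digamma_{1}(\blambda) = diag^{-1}(U^\top I U) = diag^{-1}(U^\top U)$. The key step is then to use the orthonormality of the Laplacian eigenvector matrix $U$: since $L$ is real symmetric (indeed positive semidefinite), it admits an orthonormal eigenbasis, so $U^\top = U^{-1}$ and hence $U^\top U = I$. This is exactly the property already exploited in the proof of Corollary~\ref{cor:frequency_profile}. Because the diagonal of the $n \times n$ identity matrix consists entirely of ones, $diag^{-1}(I) = {\bf1}$, and therefore $\digamma_{1}(\blambda) = {\bf1}$, as claimed.

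There is essentially no technical obstacle: the statement is an immediate specialization of Corollary~\ref{cor:frequency_profile}, the only extra ingredient being the orthonormality of $U$. The one point worth stating explicitly is that the conclusion holds for \emph{any} undirected graph, regardless of its eigenvalues or eigenvectors, since the off-diagonal entries of $U^\top U$ always vanish and its diagonal entries always equal one. In particular, the full frequency profile $\digamma_1 = U^\top C^{(1)} U$ equals the identity, confirming that the first Chebyshev kernel behaves as an all-pass filter on every graph.
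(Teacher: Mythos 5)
Your proof is correct and follows essentially the same route as the paper: both apply Corollary~\ref{cor:frequency_profile} with $C^{(1)}=I$, use the orthonormality of the eigenvector matrix to get $U^\top U = I$, and read off the diagonal to conclude $\digamma_1(\blambda)={\bf1}$. The paper additionally prefaces this with the intuitive remark that the identity kernel is an all-pass filter, which you recover as a closing observation rather than an opening one.
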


\begin{proof}
When the identity matrix is used as convolution kernel, it just directly transmits the inputs to the outputs without any modification. This process is called all-pass filter. 
Mathematically, we can calculate the full frequency profile for kernel $I$ by using Corollary~\ref{cor:frequency_profile}, namely
\begin{equation}
  \label{eq:Eq2_ap1}
  \digamma_{1} =  U^\top I U = U^\top U = I,
\end{equation}
since the eigenvectors are orthonormal. %, i.e., $U^\top U=I$. %In the full frequency profile, all elements outside the diagonal are zero and all elements on the diagonal must be 1. 
Therefore, we can parameterize the diagonal of the full frequency profile by $\blambda$ and reach the standard frequency profile as follows:
\begin{equation}
  \label{eq:Eq3_ap1}
  \digamma_{1}(\blambda)=diag(I)={\bf1}.
\end{equation}
\end{proof}

\begin{theorem}
  \label{Th:thch2}
The frequency profile of the second Chebyshev convolution kernel for any undirected arbitrary graph given by $C^{(2)}=2L/\lambda_{\max}-I$ %where $L=I-D^{-1/2}AD^{-1/2}$, 
can be defined by
\begin{equation}
  \label{eq:Eq4_ap1}
  \digamma_{2}(\blambda)=\frac{2\blambda}{\lambda_{\max}}-{\bf1}.
\end{equation}
\end{theorem}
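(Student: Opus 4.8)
The plan is to mirror the proof of Theorem~\ref{Th:thch1} and apply Corollary~\ref{cor:frequency_profile} directly to the kernel $C^{(2)}=2L/\lambda_{\max}-I$. First I would form the full frequency profile $\digamma_{2}=U^\top C^{(2)} U$ and distribute the conjugation $U^\top(\cdot)U$ across the two terms by linearity, obtaining $\digamma_{2}=\frac{2}{\lambda_{\max}} U^\top L U - U^\top U$.

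The key step is to evaluate each conjugated term using the eigendecomposition $L=U\Sigma U^\top$ with $\Sigma=diag(\blambda)$, together with the orthonormality of the eigenvectors, $U^\top U = I$. These give $U^\top L U = U^\top U \Sigma U^\top U = \Sigma = diag(\blambda)$ and $U^\top I U = U^\top U = I$, so that $\digamma_{2}=\frac{2}{\lambda_{\max}} diag(\blambda) - I$, which is already diagonal.

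Finally, reading off the diagonal (i.e.\ applying $diag^{-1}$ as prescribed by Corollary~\ref{cor:frequency_profile}) yields the standard frequency profile $\digamma_{2}(\blambda)=diag(\digamma_{2})=\frac{2\blambda}{\lambda_{\max}}-{\bf1}$, as claimed. The only point worth checking is that the off-diagonal entries genuinely vanish, but this holds automatically since both $U^\top L U$ and $U^\top U$ are diagonal, so no approximation or perturbation argument is required. There is therefore no real obstacle: the statement is a one-line consequence of orthonormality and the eigendecomposition, exactly parallel to Theorem~\ref{Th:thch1}.
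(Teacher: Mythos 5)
Your proposal is correct and follows essentially the same route as the paper's proof: conjugate $C^{(2)}$ by $U$, split by linearity, use orthonormality ($U^\top U = I$) and the eigendecomposition of $L$ (the paper writes this as $LU = U\,diag(\blambda)$, equivalent to your $L = U\Sigma U^\top$) to obtain $\digamma_{2}=\frac{2}{\lambda_{\max}}diag(\blambda)-I$, then read off the diagonal. Your added remark that the off-diagonal entries vanish exactly is a nice explicit confirmation of what the paper leaves implicit, but it is not a different argument.
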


\begin{proof}
We can compute the $C^{(2)}$ kernel full frequency profile using Corollary~\ref{cor:frequency_profile}:
\begin{equation}
  \label{eq:Eq5_ap1}
  \digamma_{2} =  U^\top \left(\frac{2}{\lambda_{\max}}L-I \right) U.
\end{equation}
Since $U^\top I U=I$, \eqref{eq:Eq5_ap1} can be rearranged as
\begin{equation}
  \label{eq:Eq6_ap1}
  \digamma_{2} = \frac{2}{\lambda_{\max}} U^\top L U - I.
\end{equation}

Since $\blambda=[\lambda_1,\dots,\lambda_n]$ are the eigenvalues of the graph Laplacian $L$, those must conform to the following condition:
\begin{eqnarray}
  \label{eq:Eq8_ap1}
  LU &=&  U diag(\blambda); \\
  U^\top L U &=& diag(\blambda).
  \label{eq:Eq8_ap11}
\end{eqnarray}
Replacing \eqref{eq:Eq8_ap11} into \eqref{eq:Eq6_ap1}, we get
\begin{equation}
  \label{eq:Eq9_ap1}
  \digamma_{2} = \frac{2}{\lambda_{\max}} diag(\blambda) - I.
\end{equation}

This full frequency profile consists of two parts, a diagonal matrix %whose values are eigenvalues. One is the
and the negative identity matrix. %The second part is also a diagonal matrix whose values are the eigenvalues. 
Therefore, we can parameterize the full frequency matrix diagonal to show the standard frequency profile as follows:
\begin{equation}
  \label{eq:Eq13_ap1}
  \digamma_{2}(\blambda)=diag(\digamma_{2})=\frac{2\blambda}{\lambda_{\max}}-{\bf1}.
\end{equation}
\end{proof}

\begin{theorem}
  \label{Th:thch3}
The frequency profile of third and followings Chebyshev convolution kernels for any undirected arbitrary graph can be defined by
\begin{equation}
  \label{eq:Eq1_thch3}
  \digamma_{k}=2\digamma_{2}\digamma_{k-1} - \digamma_{k-2},
\end{equation}
and their standard frequency profiles by
\begin{equation}
  %\label{eq:Eqchstdfreq}
  \digamma_{k}(\blambda)=2\digamma_{2}(\blambda)\digamma_{k-1}(\blambda) - \digamma_{k-2}(\blambda).
\end{equation}

\end{theorem}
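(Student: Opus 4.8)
The plan is to mirror the strategy used for the first two Chebyshev kernels in Theorems~\ref{Th:thch1} and~\ref{Th:thch2}, but now to exploit the orthonormality of $U$ so that the spatial recurrence~\eqref{eq:chebeq} is carried directly into a recurrence on frequency profiles. First I would apply the full frequency profile definition $\digamma_k=U^\top C^{(k)} U$ from Corollary~\ref{cor:frequency_profile} to the Chebyshev recurrence $C^{(k)}=2C^{(2)}C^{(k-1)}-C^{(k-2)}$, obtaining
\[
  \digamma_k = 2\,U^\top C^{(2)}C^{(k-1)} U - U^\top C^{(k-2)} U.
\]
The second term is immediately $\digamma_{k-2}$ by definition, so only the first term requires work.

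The key step, and the only place where anything nontrivial happens, is to factor that first term. Here I would insert $UU^\top=I$ between $C^{(2)}$ and $C^{(k-1)}$, which is legitimate precisely because the eigenvectors are orthonormal (the same fact already invoked in the proof of Corollary~\ref{cor:frequency_profile}). This gives
\[
  U^\top C^{(2)}C^{(k-1)} U = (U^\top C^{(2)} U)(U^\top C^{(k-1)} U) = \digamma_2\,\digamma_{k-1},
\]
and hence the full-profile recurrence $\digamma_k = 2\digamma_2\digamma_{k-1}-\digamma_{k-2}$, which is the first claimed identity.

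To descend to the standard frequency profiles I would argue by induction that every $\digamma_k$ is diagonal. The base cases are supplied by Theorems~\ref{Th:thch1} and~\ref{Th:thch2}: both $\digamma_1=I$ and $\digamma_2=\frac{2}{\lambda_{\max}}diag(\blambda)-I$ are diagonal. Assuming $\digamma_{k-1}$ and $\digamma_{k-2}$ are diagonal, the product $\digamma_2\digamma_{k-1}$ is a product of diagonal matrices, hence diagonal, so $\digamma_k$ is diagonal as well. Because the profiles are diagonal, applying $diag(\cdot)$ to both sides commutes with the matrix product (the diagonal of a product of diagonal matrices is the entrywise product of their diagonals), so the recurrence passes verbatim to the vectors and yields $\digamma_k(\blambda)=2\digamma_2(\blambda)\digamma_{k-1}(\blambda)-\digamma_{k-2}(\blambda)$, with the products read element-wise.

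I expect the only subtle point to be justifying the insertion of $UU^\top=I$ and, relatedly, making explicit that the element-wise reading of the scalar recurrence is valid only because diagonality is preserved along the induction; both are routine once orthonormality is in hand, and they also retroactively explain why the Chebyshev full frequency profiles have no off-diagonal contributions.
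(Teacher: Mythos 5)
Your proposal is correct and follows essentially the same route as the paper's proof: apply Corollary~\ref{cor:frequency_profile} to the recurrence, insert $UU^\top=I$ to factor $U^\top C^{(2)}C^{(k-1)}U$ into $\digamma_2\digamma_{k-1}$, and then pass to standard profiles via diagonality. Your explicit induction on diagonality is a slightly more careful rendering of a step the paper states only in passing, but it is the same argument.
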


\begin{proof}
Given the third and following Chebyshev kernels defined by $C^{(k)}=2C^{(2)}C^{(k-1)} - C^{(k-2)}$ and  using Corollary~\ref{cor:frequency_profile}, the corresponding frequency profile is
\begin{equation}
  \label{eq:Eq11_thch3}
  \digamma_{k}=U^\top\left( 2C^{(2)}C^{(k-1)} - C^{(k-2)}\right) U.
\end{equation}
By expanding \eqref{eq:Eq11_thch3}, we get
\begin{equation}
  \label{eq:Eq11_et_demi_thch3}
  \digamma_{k}=2U^\top C^{(2)}C^{(k-1)}U - U^\top C^{(k-2)} U.
\end{equation}
Since $UU^\top=I$, we can insert the product $UU^\top$ into \eqref{eq:Eq11_et_demi_thch3}. Thus, we have
\begin{align}
  \label{eq:Eq12_thch3}
  \digamma_{k} &= 2 U^\top C^{(2)}UU^\top C^{(k-1)}U - U^\top C^{(k-2)} U \\
  \digamma_{k} &= 2 \left(U^\top C^{(2)}U\right) \left(U^\top C^{(k-1)}U\right) - U^\top C^{(k-2)} U.
  \label{eq:Eq12_thch3_deux}
\end{align}
Since $\digamma_{k'}=U^\top C^{(k')}U$ for any $k'$, it follows that \eqref{eq:Eq12_thch3_deux} and \eqref{eq:Eq1_thch3} are identical.

Hence $\digamma_{1}$ and $\digamma_{2}$ are diagonal matrices, and the rest of the kernels frequency profiles become diagonal matrices in \eqref{eq:Eq1_thch3}. Therefore, we can write the corresponding standard frequency profiles of third and followings Chebyshev convolution kernels as follows:
\begin{equation}
  \label{eq:Eqchstdfreq}
  \digamma_{k}(\blambda)=2\digamma_{2}(\blambda)\digamma_{k-1}(\blambda) - \digamma_{k-2}(\blambda).
\end{equation}

\end{proof}

\section{Theoretical Analysis of CayleyNet Frequency Profile}
\label{section:cayleydetails}

%In this appendix, we provide the expressions of the full and standard frequency profiles of the CayleyNet convolution kernels.

CayleyNet uses in \eqref{eq:Eq4} the weight vector parametrization $F_{i,j,l}=[g_{i,j,l}(\lambda_1,h),...,g_{i,j,l}(\lambda_n,h)]^{\top}$, where the function $g(\cdot,\cdot)$ is defined in [9] by
\begin{equation}
  \label{eq:Eq1_apcy}
  g(\lambda,h)=c_0 + 2 Re\left( \sum_{k=1}^{r}c_k \left(\frac{h\lambda-\imath}{h\lambda+\textbf{i}}\right)^k  \right),
\end{equation}
where $\textbf{i}^2=-1$, $Re(\cdot)$ is the function that returns the real part of a given complex number, $c_0$ is a trainable real coefficient, and $c_1,\ldots,c_r$ are complex trainable coefficients. We can write $h\lambda-\textbf{i}$ in Euler form by $\sqrt{h^2\lambda^2+1}. e^{\textbf{i}\atantwo(-1,h\lambda)}$ and for $h\lambda+\textbf{i}$ by $\sqrt{h^2\lambda^2+1}. e^{\textbf{i}\atantwo(1,h\lambda)}$. By this substitution, \eqref{eq:Eq1_apcy} becomes
\begin{equation}
  \label{eq:Eq2_apcy}
  g(\lambda,h)=c_0+2 Re\left( \sum_{k=1}^{r}c_k e^{\textbf{i}k\left(\atantwo(-1,h\lambda)-\atantwo(1,h\lambda)\right)}  \right).
\end{equation}
where $\atantwo(y,x)$ is the inverse tangent function, which finds the angle (in range of $[-\pi, \pi]$) of a point given its $y$ and $x$ coordinates. For further simplification, let us introduce the $\theta(\cdot)$ function defined by
\begin{equation}
  \label{eq:Eq2a_apcy}
  \theta(x)=\atantwo(-1,x)-\atantwo(1,x).
\end{equation}
Since the $c_k$s are complex numbers, we can write them as a sum of real and imaginary parts, $c_k=a_k/2+\textbf{i}b_k/2$ (the scale factor 2 is added for convenience). Thus, \eqref{eq:Eq2_apcy} can be rewritten as follows:
\begin{equation}
  \label{eq:Eq3_apcy}
  g(\lambda,h)=c_0+ Re\left( \sum_{k=1}^{r}(a_k+\textbf{i}b_k) e^{\textbf{i}k\theta(h\lambda)}\right).
\end{equation}
We can replace $e^{\textbf{i}k\theta(h\lambda)}$ with its polar coordinate equivalence form  $\cos(k\theta(h\lambda)) + \textbf{i}\sin(k\theta(h\lambda))$. When we remove the imaginary components because of $Re(\cdot)$ function, \eqref{eq:Eq3_apcy} becomes
\begin{equation}
  \label{eq:Eq4_apcy}
  g(\lambda,h)=c_0+ \sum_{k=1}^{r} a_k\cos(k\theta(h\lambda)) -b_k\sin(k\theta(h\lambda)).
\end{equation}
In this definition, there is no complex coefficient, but only real coefficients ($c_0$, $a_k$ and $b_k$ for $k=1, \ldots, r$) to be tuned by training. %Because of they are coefficients, we can write $a_k=a_k/2$ and $b_k=b_k/2$. So the coefficient 2 in front of the summation has no meaning. 
By using the form in \eqref{eq:Eq4_apcy}, we can  parametrize CayleyNet by the parametrization matrix $B \in \mathbb{R}^{n \times 2r+1}$, as in \eqref{eq:Eq4b}, by
\begin{equation}
  \label{eq:Eq5_apcy}
  [g(\lambda_0,h), \dots ,g(\lambda_n,h)]^{\top}=B[c_0,a_1,b_1,\dots,a_r,b_r]^{\top}.
\end{equation}
%where $B_s$ is 
The $s$-th column vector of matrix $B$, denotes $B_s$, must fulfill the following conditions:
\begin{equation}
  \label{eq:Eq6_apcy}
  B_s=\digamma_s(\blambda) = \left\{
        \begin{array}{l@{\text{ if }}l}
            {\bf1} & s = 1 \\           \cos(\frac{s}{2}\theta(h\blambda)) & s \in \{2,4,\dots, 2r\} \\            -\sin(\frac{s-1}{2}\theta(h\blambda)) & s \in \{3,5,\dots, 2r+1\}
        \end{array}
    \right. 
\end{equation}
We can see CayleyNet as a spectral graph convolution that uses $2r+1$ convolution kernels. The first kernel is an all-pass filter, and  the frequency profiles of remaining $2r$ kernels ($\digamma_s(\blambda)$) are created using sine and cosine functions, with a parameter $h$ used to scale the eigenvalues in~\eqref{eq:Eq6_apcy}. Considering \eqref{eq:Eq9a} in Theorem~\ref{Th:th1}, we can write CayleyNet's convolutions ($C^{(s)}$) in spatial domain. CayleyNet includes the tuning of this scaling parameter in the  training pipeline. Note that because of the function definition in \eqref{eq:Eq2a_apcy}, $\theta(h\lambda)$ is not linear in $\lambda$. Therefore, $\digamma_s$ cannot be a perfect sinusoidal in $\lambda$s. 

%\appendices
\section{Theoretical Analysis of GCN Frequency Profile}
\label{section:gcndetails}

In this appendix, we study the GCN and its convolution kernel. We start by deriving the expression of its frequency profile.

\begin{theorem}
  \label{Th:th2}
The frequency profile of GCN convolution kernel is defined by
\begin{equation}
  \label{eq:Eq1_ap}
  C_{GCN} = \widetilde{D}^{-1/2}\widetilde{A} \widetilde{D}^{-1/2},
\end{equation}
 and can be written as
\begin{equation}
  \label{eq:Eq2_ap}
  \digamma_{GCN}(\blambda)= {\bf1}-\frac{p}{p+1}\blambda,
\end{equation}
where $\blambda$ is the eigenvalues of the normalized graph Laplacian and the given graph is an undirected regular graph whose node degrees are all equal to $p$.
% $\lambda_{\max}$ is the maximum eigenvalue, the two first Chebyshev kernels are defined by $C^{(1)}=I$ and $C^{(2)}=2L/\lambda_{\max}-I$, and the normalized laplacian defined by $L=I-D^{-1/2}AD^{-1/2}$, if $A$ and $D$ are the adjacency and diagonal degree matrix of a given graph whose node degrees are all equal to $p$. 
\end{theorem}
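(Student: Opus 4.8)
The plan is to exploit the regularity assumption to collapse the symmetric normalization in $C_{GCN}$ into a scalar multiple of the identity, turning the kernel into an affine function of the adjacency matrix $A$, and then to invoke Corollary~\ref{cor:frequency_profile} to read off its standard frequency profile. First I would note that when every node has degree $p$, adding self-loops gives $\widetilde{A}=A+I$ and $\widetilde{D}_{i,i}=p+1$ for all $i$, so $\widetilde{D}=(p+1)I$ and hence $\widetilde{D}^{-1/2}=(p+1)^{-1/2}I$. Substituting into \eqref{eq:Eq1_ap} immediately yields the simplification $C_{GCN}=\tfrac{1}{p+1}(A+I)$, i.e. the kernel is just a rescaled, self-loop-augmented adjacency.

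The second step links this expression to the normalized Laplacian $L=I-D^{-1/2}AD^{-1/2}$. Because the graph is $p$-regular, $D=pI$ and therefore $L=I-\tfrac{1}{p}A$, equivalently $A=p(I-L)$. The crucial consequence is that $A$ and $L$ share the eigenvector matrix $U$, so $U$ simultaneously diagonalizes $C_{GCN}$. Applying Corollary~\ref{cor:frequency_profile}, I would compute $U^\top C_{GCN} U=\tfrac{1}{p+1}\bigl(U^\top A U + U^\top U\bigr)$, and then substitute $U^\top A U = p\bigl(I-U^\top L U\bigr)=p\bigl(I-diag(\blambda)\bigr)$ using $U^\top L U=diag(\blambda)$. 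Collecting terms gives $U^\top C_{GCN} U = I-\tfrac{p}{p+1}\,diag(\blambda)$, which is already diagonal, and reading off the diagonal via $diag^{-1}(\cdot)$ produces $\digamma_{GCN}(\blambda)=\mathbf{1}-\tfrac{p}{p+1}\blambda$ as claimed.

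The computation above is short, so the genuine content is conceptual rather than technical: the whole argument hinges on regularity forcing both $\widetilde{D}$ and $D$ to be scalar matrices, which is exactly what allows $\widetilde{A}$, $A$ and $L$ to be diagonalized in the same basis $U$. I expect the main obstacle to be making this shared-eigenbasis step airtight and emphasizing why it fails in the general (non-regular) case, where the normalization no longer commutes with $A$, the kernel is no longer a polynomial in $L$, and the full frequency profile $U^\top C_{GCN} U$ acquires off-diagonal mass. This is precisely the gap that the main text bridges with the averaged-degree approximation $\digamma(\blambda)\approx\mathbf{1}-\blambda\,\overline{d}/(\overline{d}+1)$, so I would close the proof by remarking that the exact identity here is the regular-graph specialization of that approximation.
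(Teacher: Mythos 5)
Your proposal is correct and follows essentially the same route as the paper's proof: regularity collapses $\widetilde{D}$ to $(p+1)I$ so that $C_{GCN}=\frac{1}{p+1}(A+I)$, and then the eigenrelation of the normalized Laplacian (your $U^\top A U = p\bigl(I-diag(\blambda)\bigr)$ is exactly the paper's substitution $AU=pU-pU\,diag(\blambda)$) combined with Corollary~\ref{cor:frequency_profile} yields $\digamma_{GCN}=I-\frac{p}{p+1}diag(\blambda)$. Your closing remarks on the shared eigenbasis and the non-regular case likewise mirror the paper's subsequent discussion of the degree-averaged approximation, so no gap remains.
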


\begin{proof}
Since $\widetilde{D}_{i,i} = \sum_j \widetilde{A}_{i,j}$ and $\widetilde{A} = ( A + I)$, we can rewrite \eqref{eq:Eq1_ap} as:
\begin{equation}
  \label{eq:Eq3_ap}
  C_{GCN} = (D+I)^{-1/2}(A+I) (D+I)^{-1/2}.
\end{equation}
Under the assumption that all node degrees are equal to $p$, we can write the diagonal degree matrix by $D=pI$. Then, \eqref{eq:Eq3_ap} can be rewritten as %follows:
\begin{equation}
  \label{eq:Eq4_ap}
  C_{GCN} = ((p+1)I)^{-1/2}(A+I) ((p+1)I)^{-1/2},
\end{equation}
which is equivalent to
\begin{equation}
  \label{eq:Eq5_ap}
  C_{GCN} = \frac{A+I}{p+1}.
\end{equation}
%
%If we assume $\lambda_{\max}=2$, this makes second Chebyshev kernel as following form,
%\begin{equation}
%  \label{eq:Eq6_ap}
%  C^{(2)}_{cheb}=-D^{-1/%2}AD^{-1/2},
%\end{equation}
%
Using Corollary~\ref{cor:frequency_profile}, we can express the frequency profile of $C_{GCN}$ in matrix form by 
\begin{equation}
  \label{eq:Eq7_ap}
  \digamma_{GCN}= \frac{1}{p+1}U^{\top}A U + \frac{1}{p+1} I.
\end{equation}
Since $\blambda=[\lambda_1,\dots,\lambda_n]$ are the eigenvalues of the normalized graph Laplacian $L=I-D^{-1/2}AD^{-1/2}$, they must conform to the following condition:
\begin{equation}
  \label{eq:Eq8_ap}
  \left(I-D^{-1/2}AD^{-1/2} \right)  U=U diag(\blambda).
\end{equation}
\noindent According to %the assumption 
$D=pI$, it conforms to $D^{-1/2}AD^{-1/2}={A}/{p}$. Thus, \eqref{eq:Eq8_ap} can be written as
\begin{equation}
  \label{eq:Eq9_ap}
  U-\frac{A U}{p} =U diag(\blambda).
\end{equation}
\noindent Then $AU$ is expressed as
\begin{equation}
  \label{eq:Eq10_ap}
  A U =pU-pU diag(\blambda)
\end{equation}
Replacing $A U$ in \eqref{eq:Eq7_ap}, we obtain
\begin{equation}
  \label{eq:Eq11_ap}
  \digamma_{GCN}= \frac{1}{p+1}U^{\top} \left( pU -pU diag(\blambda)  \right)
  + \frac{1}{p+1} I.
\end{equation}
\noindent Since $U^{\top}U=I$, then we have
\begin{equation}
  \label{eq:Eq12_ap}
  \digamma_{GCN}= \frac{pI-p diag(\blambda)+I}{p+1}.
\end{equation}
This expression can be simplified to
\begin{equation}
  \label{eq:Eq13_ap}
  \digamma_{GCN}= I- \frac{p}{p+1} diag(\blambda),
\end{equation}
which is equal to the matrix form defined in \eqref{eq:Eq2_ap} since $\digamma_{GCN}(\blambda) =  diag(\digamma_{GCN})$.
\end{proof}

% Considering Theorem~\ref{Th:th2}, we can write the frequency profile of GCN  kernel in terms of the two first Chebyshev kernels for the graphs whose node degrees are all $p$.
% \begin{equation}
%   \label{eq:Eq9_ap}
%   \digamma_{GCN}(\blambda) = \left(1-\frac{p\lambda_{\max}}{2(p+1)} \right ) \digamma_{1}(\blambda)-\frac{p\lambda_{\max}}{2(p+1)}\digamma_{2}(\blambda)
% \end{equation}

% When we replace already known Chebyshev kernel frequency profiles defined in Theorem~\ref{Th:thch1} and Theorem~\ref{Th:thch2} into Eq \eqref{eq:Eq9_ap}, we obtain the following equation.

% \begin{equation}
%   \label{eq:Eq10_ap}
%   \digamma_{GCN}(\blambda)=1-\frac{p}{p+1}\blambda
% \end{equation}

\medskip

This demonstration shows that the GCN frequency profile acts as a low-pass filter. When the given graph is a circular undirected graph, all node degrees are equal to $p=2$, leading to a frequency profile defined by ${\bf1}-2\blambda/3$. Since the normalized graph Laplacian eigenvalues are in the range $[0,2]$, the filter magnitude linearly decreases until the third quarter of the spectrum (cut-off frequency) where it reaches zero. Then it linearly increases until the end of the spectrum. This explains the shape of the frequency profile of GCN convolutions for 1D regular graph observed in \figurename~\ref{fig:gcnnfreq}. %We observed that the cut-off frequency for the other test graphs (Cora and Citeseer) is slightly lower than third quarter of the frequency range in \figurename~\ref{fig:gcnnfreq}. It can be explained by the fact that the average node degrees are 3.89 and 2.77 for Cora and Citeseer respectively, which makes its cut-off value lower than third quarter of the spectrum. That also means that GCN is not a frequency based method because its frequency profile changes according to degree of nodes.

However, this conclusion cannot explain the perturbations on the GCN frequency profile. To analyse this point, we relax the assumption $D=pI$ and rewrite \eqref{eq:Eq3_ap}  %by expanding  left and right $(D+I)^{-1/2}$ terms:
as
\begin{equation}
  \label{eq:Eq11_ap_conv}
  C_{GCN} = (D+I)^{-1}+(D+I)^{-1/2}A (D+I)^{-1/2}.
\end{equation}
We can see that the GCN kernel consists of two parts, $C_{GCN} =c_1+c_2$, where first part is given by $c_1=(D+I)^{-1}$ and the second one is $c_2=(D+I)^{-1/2}A (D+I)^{-1/2}$. 

For the second part ($c_2$), we can write it using the element-wise multiplication operator $\odot$ (Hadamard multiplication)
\begin{equation}
  \label{eq:Eq12_ap_conv}
   c_2 =A \odot \sqrt{{\bf1}/(d+1)} \cdot \sqrt{{\bf1}/(d+1)}^{\top} ,
% c_2 =A \circ \left((d+1)^{\circ-0.5} \cdot \left((d+1)^{\circ -0.5}\right)^{\top}\right) ,
\end{equation}
where $d$ is the column degree vector $d=diag(D)$ and the division and square-root are also element-wise (Hadamard) operations. With the same notation, we can rewrite the Chebyshev second kernel, assuming that $\lambda_{\max}=2$,
\begin{equation}
  \label{eq:Eq13_ap_conv}
  C^{(2)} = - A \odot \sqrt{{\bf1}/d} \cdot\sqrt{{\bf1}/d}^{\top}.
\end{equation}
The two expressions \eqref{eq:Eq12_ap_conv} and \eqref{eq:Eq13_ap_conv} show that negative $c_2$ is an approximation of the second Chebyshev kernel if  vector $d$ consists of same values, as it was assumed in Theorem~\ref{Th:th2}. When the vector $d$ is composed of different values, the two matrices $\sqrt{{\bf1}/d}.\sqrt{{\bf1}/d}^{\top}$ and  $\sqrt{{\bf1}/(d+1)}.\sqrt{{\bf1}/(d+1)}^{\top}$ are not proportional for each coordinate (i.e., entry). To obtain $c_2$ from $C^{(2)}$, we need to use different coefficients for each coordinate of the kernel. If the difference between node degrees is important, these coefficients have the strong influence, and $c_2$ may be very different from $C^{(2)}$. Conversely, if the node degrees are quite uniform, these coefficients may be neglected. This phenomenon is the first cause of perturbation on GCN frequency profile.

The first part ($c_1$) of the GCN kernel in \eqref{eq:Eq11_ap_conv} is more interesting. Actually, it is a diagonal matrix that shows the contribution of each node in the convolution process. Instead of looking for some approximations of known frequency profiles such as those of Chebyshev kernels, we can write its frequency profile directly. Using Corollary~\ref{cor:frequency_profile}, we can express the frequency profile of $c_1$ in matrix form by
\begin{equation}
  \label{eq:Eq14_ap}
  \digamma_{c_1} = ( U^\top c_1 \, U ),
\end{equation}
where $U$ is the eigenvectors matrix. By taking advantage of having a diagonal kernel $c_1$, we can express each component of full frequency profile as
\begin{equation}
  \label{eq:Eq15_ap}
  \digamma_{c_1}(i,j) = \sum_{k=1}^{n}\left( \frac{1}{1+d_k}U_{i,k}U_{j,k} \right),
\end{equation}
where $n$ is the number of nodes in the graph, $d_k$ is degree of the $k$-th node, $U_{i,k}$ is the $k$-th element of $i$-th eigenvector. As eigenvectors $U_i$ and $U_j$ are orthogonal for $i\neq j$, their scalar product is null. However, in \eqref{eq:Eq15_ap}, the weighting coefficient $\frac{1}{1+d_k}$ is not constant over all the dimensions of the eigenvectors. Therefore, there is no guarantee that $\digamma_{c_1}(i,j)$ is null. This is another reason that explains that the GCN frequency profile has many non-zero elements outside of the diagonal.

%which is not  Using that equation, it is clear to see that the component out of the diagonal (where $i \neq j$ ) are not zero. Since $U_{[i,:]}$ and $U_{[j,:]}$ are two different eigenvector, they are orthogonal and their dot products are zero. But in this case, dot production of each eigenvector pairs elements are weighted by $1/(1+d_k)$. Since they are not necessarily equal, dot production may have different weights and it may effect the result of weighted dot products are not zero. This is another reason which explains why GCN frequency profile has many non-zero element out of the diagonal.

In addition, it is also clear that the standard frequency profile of $c_1$ (diagonal of $\digamma_{c_1}$, i.e., $\digamma_{c_1}(i,i)$ in \eqref{eq:Eq15_ap}) is not smooth. Indeed, the diagonal elements of $\digamma_{c_1}$ can be written as a weighted sum of squared eigenvalues elements, which again is weighted by $1/(1+d_k)$. If the latter is constant for all $k$, the sum of squared eigenvectors elements has to be 1 since the eigenvectors have unit L2-norm. But in the general case where $1/(1+d_k)$ are not necessarily constant over all the dimensions of eigenvectors, the diagonal of the matrix may have some perturbations. This point constitutes another explanation on the fact that the GCN standard frequency profile is not smooth.

On the other hand, under the assumption that the node degrees distribution is uniform, we can derive the following approximation:
\begin{equation}
  \label{eq:Eq16_ap}
  p \approx \overline{d}=\frac{1}{n}\sum_{k=1}^n{d_k}.
\end{equation}
We can then write an approximation of the GCN frequency profile as a function of the average node degree by replacing $p$ with $\overline{d}$ in \eqref{eq:Eq10_ap} and obtain the final approximation:
\begin{equation}
  \label{eq:Eq17_ap}
  \digamma_{GCN}(\blambda) \approx {\bf1}-\frac{\overline{d}}{\overline{d}+1}\blambda.
\end{equation}
We can theoretically show the cut-off frequency where GCN kernel's frequency profile reach 0 by
\begin{equation}
  \label{eq:Eq18_ap}
  \lambda_{\text{cut}} \approx \frac{\overline{d}+1}{\overline{d}}.
\end{equation}

\section{Application Details}
\label{section:appdetails}

This section presents some additional details on the experimental settings and parameter tuning.

In the experiments of transductive learning problems, we tuned single convolution low-pass filter's parameter $\eta$ in $\digamma_{1}(\blambda)=({\bf1}-\blambda/\lambda_{\max})^{\eta}$. \tablename~A1 shows searching space of $\eta$ and their loss and accuracy performance over different datasets. One can see that we decided to use $\eta=5$ for Cora and Citeseer, and $\eta=3$ for PubMed dataset where it maximizes the validation set performance.

\begin{table}[!h]
\renewcommand{\arraystretch}{1.3}
\caption{Minimum validation set loss value and maximum validation set accuracy over different low-pass filters.}
\centering
\begin{tabular}{c c c c c c c}
Convolution & \multicolumn{2}{c}{Cora} & \multicolumn{2}{c}{Citeseer} & \multicolumn{2}{c}{PubMed}  \\
$\digamma_{1}(\blambda)$ & Loss & Acc & Loss & Acc & Loss & Acc \\
\toprule
$({\bf1}-\blambda/\lambda_{\max})^1$  &1.116  &80.4 & 1.12 & 73.0   &0.654 &77.1 \\
%$(1-\lambda/\lambda_{\max})^2$ &0.822 & 80.6 &1.10 &73.2  \\
$({\bf1}-\blambda/\lambda_{\max})^3$ &0.745 &81.8 &1.02 &72.6 &\textbf{0.572} &\textbf{81.1} \\
%$(1-\lambda/\lambda_{\max})^4$ &0.717 &81.8 &1.03 &72.2  \\
$({\bf1}-\blambda/\lambda_{\max})^5$ &\textbf{0.705} &\textbf{81.8} &\textbf{1.02} &\textbf{73.8}   &0.592 &80.5 \\
$({\bf1}-\blambda/\lambda_{\max})^{10}$ &0.752 &81.2 &1.01 &72.2 &- &- \\
$({\bf1}-\blambda/\lambda_{\max})^{20}$ &0.792 &80.8 &1.01 &71.2 &- &- \\

\bottomrule
\end{tabular}
\label{tabletdesign}

\end{table}

\begin{table}%[t]
\renewcommand{\arraystretch}{1.3}
\caption{Used hyperparameters.}
\scriptsize
\centering
\begin{tabular}{l c c c c c c c }
\hline
Hyperparameters & Cora & Citeseer & PubMed & PPI & PROTEINS & ENZYMES-label & ENZYMES-allfeat \\
\hline

Hidden Activations & ReLU & ReLU & ReLU & ReLU & ReLU & ReLU & ReLU \\

Output Activation & Linear & Linear & ReLU  & Linear & Linear & Linear & Linear \\

Hidden Biases & False & False & False & True & False & False & False  \\

Output Bias & True  & True  & False & True  & True & True  & True  \\

Input Dropout & 0.75 & 0.75 &0.25 & 0 & 0 & 0.1 & 0.1 \\
Kernel Dropout & 0.75 & 0.75 &0 & 0 & 0 & 0.1 & 0.1 \\
Weight Decay &3e-4 & 3e-4 & 5e-4 & 0 & 0 & 1e-4 & 1e-4 \\
Weight Decay on DSG & 3e-3 & 3e-3 & 5e-3 & 0 &- & - & - \\
Learning Coeff & 0.01 & 0.01 & 0.01 & 0.01 & 0.0005 & 0.001 & 0.001 \\
Batch Size & 1 & 1 & 1 & 1 & 333 & 180 & 180 \\
Epoch & 400 & 100 & 250 & 500 &100 &500 &500 \\
\hline
\end{tabular}
\label{tablehyper}
\end{table}

In depthwise separable graph convolution layer, the initialization of the trainable parameters $w^{(s,l)}$ affects the performance. If designed convolutions are supposed to have equal effect on the model, these parameters can be initialized randomly. But, if one is supposed to have more effect on the model, the important convolution kernel's correspondence weights can be initialized by 1, the rest of them initialized by 0. In our model, we assumed the first kernel is always the most important kernel. Thus, we initialized the first kernel's depthwise separable weights as $w^{(1,l)}=1$, and the rest of the kernel's depthwise separable weights $w^{(s,l)}=0$ when $s>1$. In this way, the model starts training as there is only kernel, which is supposed to be the most important one.

The used hyperparameters in our experiments are presented in Table~\ref{tablehyper}. We applied $\verb|softmax|$ to the output of the models and calculate cross entropy loss function for all problems expect PPI dataset. Since PPI is two class classification problem and we coded output by one neuron, we applied $\verb|tansig|$ to the output of the PPI model and used binary cross entropy as loss function. In our models we did not consider any regularization on the bias parameter, but we applied the L2-loss to the trainable weights. In the depthwise separable layer, there are two different kinds of weights where additional one is depthwise weights. That is why in \tablename~\ref{tablehyper}, there is two different weight decays. We always used $\verb|ReLU|$ activation on the hidden layers and the Linear for output layers% expect PubMed
. The table also provides if bias values are used in the hidden and output layers%You can find if the hidden layers and output layer use bias value or not in the table
. In our model, we used two different types of dropout: the dropout applied on the inputs of the layer as usually used in the literature, and the dropout applied on the convolution kernel, which was first used in \cite{velivckovic2017graph} according to the best of our knowledge. Since Cora, Citeseer and PubMed datasets consist of one single graph, batch size is 1 for these problems. For the PPI dataset of only 24 graphs, we still prefer to update the model for each training graph. But for PROTEINS and ENZYMES datasets, we update the model 3 times in each epoch. Since in PROTEINS there are 1113 graphs and in each fold there are 1000 graphs in the train set, we used a 333 batch size. As the same in ENZYMES there is 540 graphs in each train fold, we used a 180 batch size to update the model 3 times in a single epoch. We used the Adam optimization and a fixed learning-coefficient in all models. The used learning coefficient and the maximum epoch number can be found in the Table~\ref{tablehyper}. 

\end{document}